\documentclass{article}

\usepackage{microtype}
\usepackage{graphicx}
\usepackage{subcaption}
\usepackage{booktabs} 

\usepackage{hyperref}

\usepackage[preprint]{icml2026}

\usepackage{amsmath}
\usepackage{amssymb}
\usepackage{mathtools}
\usepackage{amsthm}
\usepackage{color}
\usepackage{enumerate}
\usepackage{comment}
\usepackage{textcomp}
\usepackage{bbm}

\usepackage[capitalize,noabbrev]{cleveref}

\newcommand{\R}{{\mathbb R}}

\newtheorem{assumption}{Assumption}
\newtheorem{theorem}{Theorem}
\newtheorem{lemma}[theorem]{Lemma}
\newtheorem{remark}[theorem]{Remark}
\newtheorem{proposition}[theorem]{Proposition}

\theoremstyle{definition}
\newtheorem{definition}[theorem]{Definition}

\usepackage[textsize=tiny]{todonotes}

\icmltitlerunning{Approximation Theory of Laplacian-Based Neural Operators for Reaction–Diffusion Systems}

\begin{document}

\twocolumn[
  \icmltitle{Approximation Theory of Laplacian-Based Neural Operators for Reaction–Diffusion Systems
}



  \icmlsetsymbol{equal}{*}

\begin{icmlauthorlist}
    \icmlauthor{Takashi Furuya}{xxx}
    \icmlauthor{Ryo Ozawa}{yyy}
    \icmlauthor{Jenn-Nan Wang}{zzz}
  \end{icmlauthorlist}

  \icmlaffiliation{xxx}{Doshisha University, RIKEN AIP}
  
  \icmlaffiliation{yyy}{Tohoku University}

  \icmlaffiliation{zzz}{National Taiwan University}

   \icmlcorrespondingauthor{Takashi Furuya}{tfuruya@mail.doshisha.ac.jp}

  \icmlkeywords{Machine Learning, ICML}

  \vskip 0.3in
]



\printAffiliationsAndNotice{}  

\begin{abstract}

Neural operators provide a framework for learning solution operators of partial differential equations (PDEs), enabling efficient surrogate modeling for complex systems. 
While universal approximation results are now well understood, approximation analysis specific to nonlinear reaction–diffusion systems remains limited.
In this paper, we study neural operators applied to the solution mapping from initial conditions to time-dependent solutions of a generalized Gierer–Meinhardt reaction–diffusion system, a prototypical model of nonlinear pattern formation.
Our main results establish explicit approximation error bounds in terms of network depth, width, and spectral rank by exploiting the Laplacian spectral representation of the Green’s function underlying the PDE. We show that the required parameter complexity grows at most polynomially with respect to the target accuracy, demonstrating that Laplacian eigenfunction–based neural operator architectures alleviate the curse of parametric complexity encountered in generic operator learning. Numerical experiments on the Gierer–Meinhardt system support the theoretical findings.

\end{abstract}



\section{Introduction}
Neural operators have emerged as a powerful extension of traditional neural networks from finite-dimensional spaces to infinite-dimensional spaces, namely from function spaces to function spaces; see, for example, \cite{anandkumar2020neural, bhattacharya2021model, kovachki2023neural, lu2019deeponet}.
In the context of partial differential equations (PDEs), while classical numerical solvers often incur substantial computational cost due to high dimensionality, nonlinearity, or parametric dependence, neural operators—once trained—can act as efficient surrogate models that enable rapid inference across a wide range of inputs. 
This property is particularly advantageous in scenarios such as Bayesian inverse problems, uncertainty quantification, and real-time control, where repeated PDE solves are required.
Despite their demonstrated empirical success, the theoretical understanding of neural operators, especially for nonlinear PDE systems, remains limited.

In this work, we focus on a generalized Gierer–Meinhardt (GM) reaction–diffusion system,
which serves as a canonical model for self-organized pattern formation driven by nonlinear interactions and diffusion imbalance.
The GM system is known to exhibit rich dynamics, including Turing instabilities and sharp spatial patterns, making it a challenging problem for machine learning methods.
We study the solution operator mapping initial conditions to time-dependent solutions and aim to approximate it using a neural operator architecture that explicitly incorporates the structure of the underlying PDE.

\subsection{Related works}

Neural operators were introduced by \cite{kovachki2023neural} as one of the operator learning methods, such as DeepONet \cite{chen1995universal, lu2019deeponet} and PCA-Net \cite{bhattacharya2021model}. Various neural operator architectures have been proposed, differing primarily in the choice of basis functions within non-local operators.
Notable examples include Graph Neural Operators (GNOs) \cite{li2020neural}, Fourier Neural Operators (FNOs) \cite{li2020fourier}, Wavelet Neural Operators (WNOs) \cite{tripura2022wavelet, gupta2021multiwavelet}, Spherical Fourier Neural Operators (SFNOs) \cite{bonev2023spherical}. 
These architectures have demonstrated empirical success as the surrogate models of simulators across a wide range of PDEs, as benchmarked in \cite{takamoto2022pdebench}.

One of crucial theoretical foundations of neural operators is the universal approximation theorems, This theorem establishes that, for any target operator and desired level of accuracy, there exists neural operator with sufficiently large learnable parameters such that it neural operator approximates the target operators. 
This indicates the capacity of the class of neural operator to approximate a wide range of operators. 
Universal approximation theorems for operator learning were established for neural operators \cite{kovachki2021universal, kovachki2023neural, lanthaler2023nonlocal, kratsios2024mixture}, DeepOnet \cite{lu2019deeponet, lanthaler2022error}, and PCA-net \cite{lanthaler2023operator}. 

Beyond universality, a growing body of work has developed quantitative approximation theories for specific classes of PDEs, providing explicit bounds on the network depth, width, or rank required to achieve a prescribed accuracy.
In particular, \cite{lanthaler2023curse} showed that operator learning for general operators suffers from the so-called \emph{curse of parametric complexity}, where the number of learnable parameters grows exponentially as the desired approximation accuracy increases. This observation motivates restricting attention to solution operators of specific PDEs, for which additional structural properties can be exploited.

Following this line of work, \cite{kovachki2021universal} and \cite{lanthaler2023operator} developed quantitative approximation results for Darcy flow and Navier--Stokes equations using FNOs and PCA-Net, respectively.
Additionally, \cite{lanthaler2023curse} analyzed Hamilton--Jacobi equations via Hamilton--Jacobi neural operators, while \cite{furuya2024quantitative} established approximation theorems for nonlinear parabolic equations. 
Further studies based on DeepONet include \cite{chen2023deep, lanthaler2022error, marcati2023exponential}, which addressed elliptic, parabolic, and hyperbolic equations, as well as \cite{deng2021convergence} for advection--diffusion equations. 
To the best of our knowledge, however, there is currently no rigorous approximation theory for neural operators applied to nonlinear reaction--diffusion systems, and in particular to the Gierer--Meinhardt system.

On the other hand, neural operators based on Laplacian eigenfunctions have recently been proposed; see, e.g., \cite{hao2025laplacian, chen2023learning}.
Unlike FNOs, which are naturally suited to periodic domains, Laplacian-based approaches incorporate information on the domain geometry and boundary conditions directly into the architecture via Laplacian eigenfunctions. Nevertheless, the theoretical role of Laplacian eigenfunctions in the approximation theory of neural operators remains largely unexplored.

\subsection{Contributions}

Our main contributions are summarized as follows:

\begin{itemize}
    \item \textbf{Theoretical analysis of neural operators for nonlinear reaction--diffusion systems.}
    We develop a quantitative approximation theory for neural operators applied to generalized GM systems, providing explicit error bounds in terms of network depth, width, and spectral rank. To our knowledge, this is the first
    rigorous approximation-theoretic result for neural operators targeting nonlinear reaction--diffusion systems.
    \item \textbf{Laplacian eigenfunction--based operator approximation with PDE-structured analysis.}
    By exploiting the spectral representation of the Green's function, we show that the required network depth, width, and spectral rank grow at most polynomially with respect to the target accuracy, indicating that the Laplacian eigenfunction--based neural operator (LENO) mitigates the curse of dimensionality in parameter complexity. 
    Numerical experiments on the GM system support the theory and demonstrate that LENO outperforms standard Fourier Neural Operator architectures.
\end{itemize}

\subsection{Notations}

Let $X$ be a set.
For an operator $A : X \to X$ and $k \in \mathbb{N}_0$, we denote by $A^{[k]}$
the $k$-fold composition of $A$, with the convention that $A^{[0]}$ is the
identity operator on $X$, and
\[
A^{[k]} :=
\underbrace{A \circ \cdots \circ A}_{\text{$k$ times}}
\quad \text{for } k \ge 1.
\]

Let $\Omega \subset \mathbb{R}^n$ be a domain and let $q \in [1,\infty]$.
The Lebesgue space $L^q(\Omega)$ consists of all measurable functions
$f : \Omega \to \mathbb{R}$ such that
\[
\|f\|_{L^q(\Omega)} :=
\left( \int_\Omega |f(x)|^q \, dx \right)^{1/q} < \infty
\quad \text{if } q < \infty,
\]
and
\[
\|f\|_{L^\infty(\Omega)} :=
\operatorname{ess\,sup}_{x \in \Omega} |f(x)| < \infty
\quad \text{if } q = \infty.
\]
For $m \in \mathbb{N}$, we write
\[
L^q(\Omega)^m := \underbrace{L^q(\Omega) \times \cdots \times L^q(\Omega)}_{\text{$m$ times}}.
\]

Let $r,s \in [1,\infty]$, $T>0$, and $\beta \in \mathbb{R}$.
The weighted mixed Lebesgue space
$L^{r}_{t^\beta}([0,T]; L^{s}(\Omega)^m)$ consists of all measurable functions
$f : [0,T] \times \Omega \to \mathbb{R}^m$ such that
\begin{align*}
&
\|f\|_{L^{r}_{t^\beta}([0,T]; L^{s}(\Omega)^m)}
\\
& :=
\left( \int_0^T t^\beta \|f(t,\cdot)\|_{L^{s}(\Omega)^m}^r \, dt \right)^{1/r} < \infty
\quad \text{if } r < \infty,
\end{align*}
and
\begin{align*}
&
\|f\|_{L^{\infty}_{t^\beta}([0,T]; L^{s}(\Omega)^m)} 
\\
&:=
\operatorname{ess\,sup}_{t \in [0,T]} t^\beta \|f(t,\cdot)\|_{L^{s}(\Omega)^m} < \infty
\quad \text{if } r = \infty.
\end{align*}
In particular, the standard mixed Lebesgue space
$L^{r}([0,T]; L^{s}(\Omega)^m)$ is recovered by taking $\beta = 0$.

\section{Generalized Gierer--Meinhardt system}

We consider the following generalized Gierer--Meinhardt system
\cite{masuda1987reaction, jiang2006global}, which is a prototypical
activator--inhibitor reaction--diffusion model.
Let $\Omega \subset \mathbb{R}^n$ $(n=1,2,3)$ be a bounded smooth domain.
The system reads
\begin{subequations}\label{uv}
\begin{align}
    &u_t = d_1 \Delta u - \lambda_1 u + g_1(x,u,v),
    \quad \text{in } \Omega \times (0,T), \label{eq:u}\\
    &v_t = d_2 \Delta v - \lambda_2 v + g_2(x,u,v),
    \quad \text{in } \Omega \times (0,T), \label{eq:v}\\
    &\frac{\partial u}{\partial \nu}
    = \frac{\partial v}{\partial \nu}
    = 0, \quad \text{on } \partial \Omega \times (0,T), \label{boundary}\\
    &u(x,0) = u_0(x) \ge 0,\quad
    v(x,0) = v_0(x) > 0,
    \quad \text{in } \overline{\Omega}, \label{initial}\\
    &\frac{\partial u_0}{\partial \nu}
    = \frac{\partial v_0}{\partial \nu}
    = 0,
    \quad \text{on } \partial \Omega. \label{comp}
\end{align}
\end{subequations}
Here $d_1, d_2, \lambda_1, \lambda_2 > 0$, and the nonlinear reaction terms
are given by
\[
\begin{aligned}
g_1(x,u,v) &= \rho_1(x,u,v)\frac{u^p}{v^q} + \sigma_1(x),\\
g_2(x,u,v) &= \rho_2(x,u,v)\frac{u^r}{v^s} + \sigma_2(x),
\end{aligned}
\]
where $\rho_1,\rho_2>0$, $\sigma_1>0$, and $\sigma_2 \ge 0$.
Here $u$ and $v$ denote the concentrations of the activator and inhibitor,
respectively.

The classical Gierer--Meinhardt model, originally introduced to describe
pattern formation during hydra regeneration \cite{gierer1972theory}, is a
special case of \eqref{uv}.
Following \cite{masuda1987reaction}, the interaction structure of
\eqref{uv} can be characterized by the two dimensionless indices
\[
\left\{
\begin{aligned}
&\varrho := \frac{p-1}{r}
\quad \text{(net self-activation index)},\\
&\gamma := \frac{q}{s+1}
\quad \text{(net cross-inhibition index)}.
\end{aligned}
\right.
\]
These indices quantify the relative strength of activation and inhibition
mechanisms in the system.

In particular, for the simplified system
\[
\begin{aligned}
    &u_t = d_1 \Delta u - u + \frac{u^p}{v^q},
    \quad \text{in } \Omega \times (0,T), \\
    &v_t = d_2 \Delta v - v + \frac{u^r}{v^s},
    \quad \text{in } \Omega \times (0,T),
\end{aligned}
\]
it is straightforward to verify that $(u,v)=(1,1)$ is a stable equilibrium
of the associated homogeneous ODE system
\[
u_t = -u + \frac{u^p}{v^q}, \qquad
v_t = -v + \frac{u^r}{v^s}.
\]

\begin{assumption}
\label{ass:coefficients}
We assume the following :

\begin{itemize}
    \item[\rm (a)] $d_1<d_2$ ;
    \item[\rm (b)] $p>1$, $q,r>0$ ;
    \item[\rm (c)] $-1<s\le 0$ ;
    \item[\rm (d)] $\varrho<\gamma$ ;
    \item[\rm (e)] $0 < \frac{p - 1}{r} < 1$ ; 
    \item[\rm (f)]
    $\sigma_j\in C^1(\overline{\Omega})$ and $\rho_j \in C^1(\overline{\Omega}\times \R^2_+)\cap L^\infty(\overline{\Omega}\times\R_+^2)$ for $j=1,2$ ; 
    \item[\rm (g)] $\inf_{x\in\Omega}\sigma_1(x), \ \inf_{(x,u,v)\in\Omega\times\R_+^2}\rho_2(x,u,v)>0$.
\end{itemize}
\end{assumption}

Assumption~\ref{ass:coefficients} is standard in the analysis of
Gierer--Meinhardt type systems and has the following comments.

\begin{itemize}
    \item[\rm (a)]
    This expresses the classical diffusion imbalance, namely that the inhibitor diffuses faster than the activator. This diffusion disparity is a key mechanism underlying diffusion-driven instabilities and pattern formation.
    \item[\rm (b)]
    The condition $p>1$ ensures the presence of self-activation of the activator, while $q,r>0$ ensure non-degenerate interaction between the activator and inhibitor, which are characteristic features of Gierer--Meinhardt type systems.
    \item[\rm (c)]
    This allows for weakly singular reaction terms involving inverse powers of the inhibitor concentration, thereby permitting strong nonlinear effects such as spike formation and possible blow-up behavior, while ensuring integrability of the reaction terms and allowing for rigorous analysis. 
    \item[\rm (d)]
    This is known as the \emph{Turing condition}. It guarantees that the self-activation of the activator is sufficiently controlled by cross-inhibition, which is essential for the existence of diffusion-driven instabilities and nontrivial spatial patterns. 
    \item[\rm (e)] 
    This restricts the relative strength of self-activation and
    inhibition. This condition plays an important role in the well-posedness of the system and in establishing uniform boundedness of solutions. 
    \item[\rm (f)] 
    This imposes regularity and boundedness assumptions on the spatially dependent coefficients. These assumptions are technical but standard, and they also ensure the well-posedness of the system. 
    \item[\rm (g)]
    This enforces strict positivity of the source term $\sigma_1$ and the inhibition coefficient $\rho_2$. This is crucial for ensuring the positivity and long-time stability of solutions.
\end{itemize}

For further details on assumptions for Gierer--Meinhardt systems,
see, for example, \cite{ni2011mathematics, wei2013mathematical}.

In what follows, we fix the parameters $d_1,d_2,\lambda_1,\lambda_2$ and the functions $\rho_1,\rho_2,\sigma_1,\sigma_2$ satisfying Assumption~\ref{ass:coefficients}. 

\subsection{Global well-posedness}

To establish the well-posedness of global-in-time solutions to \eqref{uv}, we recall the following fundamental result proved in
\citep[Theorem~1]{jiang2006global}, which improves earlier work in
\cite{masuda1987reaction}.

\begin{proposition}
\label{prop:global-sol}
Let Assumption~\ref{ass:coefficients} hold.
Let $u_0, v_0 \in W^{2,\alpha}(\Omega)$ with
\[
\alpha > \max\{n,2\}.
\]
Then the system \eqref{uv} admits a unique nonnegative global solution $(u,v)$. Moreover, if there exist constants $m_1,m_2,M_1,M_2>0$ such that
\[
m_1 \le u_0(x) \le M_1, \qquad m_2 \le v_0(x) \le M_2
\quad \text{for all } x\in\Omega,
\]
then there exist constants $\tilde m_1,\tilde m_2,\tilde M_1,\tilde M_2>0$,
depending only on $m_1,m_2,M_1,M_2$, such that
\begin{equation}\label{lb}
\tilde m_1 \le u(x,t) \le \tilde M_1,
\qquad
\tilde m_2 \le v(x,t) \le \tilde M_2,
\end{equation}
for all $x\in\Omega$ and $t\in(0,\infty)$.
\end{proposition}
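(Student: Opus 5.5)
The plan follows the classical strategy of Masuda–Takahashi and Jiang: local existence by semigroup methods, then a priori bounds that rule out finite-time blow-up, then uniform-in-time bounds via Lyapunov-type functionals.

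\textbf{Step 1: local well-posedness.} Write \eqref{uv} as an abstract semilinear equation $w_t + A w = G(w)$, where $w=(u,v)$ and $A=\mathrm{diag}(-d_1\Delta+\lambda_1,\,-d_2\Delta+\lambda_2)$ carries Neumann conditions. The operator $A$ is sectorial on $L^\alpha(\Omega)^2$. Because $\alpha>\max\{n,2\}$, the domain $W^{2,\alpha}$ embeds into $C^1(\overline\Omega)$. On the set where $v$ is bounded below by a positive constant, the nonlinearity $G$ is locally Lipschitz from $W^{2,\alpha}$ (or a fractional power space) into $L^\alpha$, by Assumption~\ref{ass:coefficients}(f). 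Henry's theory then gives a unique classical solution on a maximal interval $[0,T_{\max})$. Nonnegativity of $u$ follows from the maximum principle, since $g_1\ge\sigma_1>0$ when $u=0$. Positivity of $v$ follows similarly: $g_2\ge 0$, so comparison with the solution of $v_t=d_2\Delta v-\lambda_2 v$ yields $v\ge (\min v_0)e^{-\lambda_2 t}>0$. Finally, if $T_{\max}<\infty$, then $\|u\|_\infty+\|v\|_\infty+\|1/v\|_\infty$ must blow up.

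\textbf{Step 2: a priori bounds and global existence.} First, using $\sigma_1>0$ and $u\ge0$, derive a lower bound on $u$ away from $t=0$. Using $\inf\rho_2>0$, this yields a uniform lower bound on $v$. The central step is the $L^\ell$ estimates for $u$. For this I would follow Masuda–Takahashi and consider the functional
\[
\Phi(t)=\int_\Omega \frac{u^{a}}{v^{b}}\,dx,
\]
with exponents $a,b>0$ chosen so that the indefinite terms in $\Phi'$ are controlled. After integration by parts, the diffusion terms produce a quadratic form in $(\nabla u/u,\nabla v/v)$. This form is nonpositive provided $b/a$ lies in an interval permitted by $d_1<d_2$. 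The reaction terms $u^{a+p-1}v^{-q-b}$ and $-u^{a+r}v^{-s-b-1}$ must then be compared via Young's inequality, and the condition $\varrho<\gamma$ together with $(p-1)/r<1$ makes this comparison possible for large $a$. This produces bounds of the form $\Phi(t)\le C$, and hence $\|u\|_{L^\ell}$ bounds for every $\ell$, using the lower bound on $v$. Given these, the variation-of-constants formula with $L^\ell$–$L^\infty$ smoothing of the Neumann heat semigroup bounds $\|v\|_\infty$ and then $\|u\|_\infty$. This excludes blow-up, so $T_{\max}=\infty$.

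\textbf{Step 3: uniform-in-time bounds.} When $m_j,M_j$ bound the data, all the constants in Step 2 can be made independent of $t$. The damping terms $-\lambda_j$ give exponential decay in the Duhamel formula. The Lyapunov inequality takes the form $\Phi'\le -c\Phi + C$, which gives a $t$-independent bound. The lower bounds come from comparison with the ODE $\underline u'=-\lambda_1\underline u+\inf\sigma_1$ and the analogous ODE for $v$; these depend only on $m_j$ and on the coefficients. The resulting $\tilde m_j,\tilde M_j$ therefore depend only on $m_j,M_j$ (and the fixed coefficients), which gives \eqref{lb}.

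\textbf{Main obstacle.} I expect the Lyapunov estimate in Step 2 to be the hardest part: choosing $(a,b)$ so that the gradient quadratic form is nonpositive and, at the same time, the reaction terms close under Assumption~\ref{ass:coefficients}(c)–(e). My first attempt would be to take $b/a$ close to the ratio $(p-1)/r$, as in Jiang's argument.
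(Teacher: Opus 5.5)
The paper gives no argument for this proposition; it is imported from \citep[Theorem~1]{jiang2006global}, so your sketch has to stand on its own. Step~1 and the comparison-based lower bounds for $u$ and $v$ are fine. The core of Step~2, however, breaks down in two places.

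First, the step ``$\Phi(t)=\int_\Omega u^a v^{-b}\,dx\le C$, hence $\|u\|_{L^\ell}$ bounds, using the lower bound on $v$'' runs the wrong way. With $b>0$, the bound $v\ge c$ only gives $\Phi\le c^{-b}\int_\Omega u^a\,dx$. To control $\int_\Omega u^a\,dx$ you need $\int_\Omega u^a\,dx\le\|v\|_\infty^{b}\,\Phi$, i.e.\ an upper bound on $v$. In your ordering that bound is only obtained afterwards, from the $L^\ell$ bound on $u$, so the argument is circular as written. It must be replaced by a coupled bootstrap. Duhamel for $v$ with $L^{a/r}\to L^\infty$ smoothing (valid for $a>nr/2$, absorbing $v^{-s}$ since $-1<s\le0$) gives $\sup_{[0,t]}\|v\|_\infty^{1+s}\lesssim 1+\sup_{[0,t]}\|u\|_{L^a}^{r}$. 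Combined with $\|u\|_{L^a}^a\le\|v\|_\infty^{b}\Phi$, this closes only when $rb<(1+s)a$.

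Second, your description of the gradient term is incorrect, and the exponent choice you propose would fail. With $X=\nabla u/u$, $Y=\nabla v/v$, the diffusion contributes $\int_\Omega u^a v^{-b}\bigl(-a(a-1)d_1|X|^2+ab(d_1+d_2)X\cdot Y-b(b+1)d_2|Y|^2\bigr)dx$. This is nonpositive iff $ab(d_1+d_2)^2\le 4(a-1)(b+1)d_1d_2$, i.e.\ $b\le 4(a-1)d_1d_2/\bigl(a(d_1-d_2)^2+4d_1d_2\bigr)$. The condition is symmetric in $d_1,d_2$, so the ordering $d_1<d_2$ plays no role, and it keeps $b$ bounded as $a\to\infty$. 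Since you need $a$ large, taking $b/a\approx\varrho$ makes the form indefinite whenever $d_1\ne d_2$. It also breaks $rb<(1+s)a$ when $p-1>1+s$.

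The workable regime is the opposite one: $a$ large and $b>0$ small. The reaction term is then handled through the identity $u^{a+p-1}v^{-b-q}=(u^{a+r}v^{-b-1-s})^{\varrho}(u^av^{-b})^{1-\varrho}v^{-(1+s)(\gamma-\varrho)}$ together with $v\ge c$; this is where (d) and (e) enter.

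Finally, the dissipative inequality $\Phi'\le -c\Phi+C$ in Step~3 does not follow from that Young step, which only yields $\Phi'\le C\Phi+C$. You additionally need the linear damping $-(a\lambda_1-b\lambda_2)\Phi$, which is positive because $b\ll a$. You also need a splitting of the $(u,v)$-range:
\begin{itemize}
\item where $u^r\gg v^{1+s}$, use $\varrho<1$;
\item where $v$ is large, use the strict inequality $\varrho<\gamma$;
\item the remaining region is bounded.
\end{itemize}
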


\subsection{Local well-posedness}

For the purpose of studying quantitative approximation properties of the solution operator associated with \eqref{uv}, we next discuss the well-posedness of local-in-time solution. 
In what follows, we fix $\alpha > \max \{ n, 2 \}$, $0<m_1<M_1$, and $0< m_2 < M_2$, and choose $0<\tilde{m}_1<\tilde{M}_1$ and $0< \tilde{m}_2 < \tilde{M}_2$ satisfying \eqref{lb}. We introduce the following set of initial conditions:
\begin{align*}
&
\mathcal{X} = \mathcal{X}(\alpha, m_1, m_2, M_1, M_2)
:= \Big\{ (u_0, v_0)^\top \in L^\infty(\Omega)^2 \ :
\\ 
& \qquad  m_1 \le u_0 \le M_1, \; m_2 \le v_0 \le M_2 \Big\} \cap W^{2,\alpha}(\Omega)^2.    
\end{align*}

For $U_0 \in \mathcal{X}$, the system \eqref{uv} can be written in integral
form as
\begin{equation}\label{uv-int}
U(x,t) = \Theta_{U_0}[U](x,t),
\end{equation}
where
\begin{align}
&\Theta_{U_0}[U](x,t)
:= \int_{\Omega}\Phi(x,y,t)U_0(y)\,dy
\nonumber\\
& + \int_{0}^{t}\int_{\Omega}
\Phi(x,y,t-s)\,\tilde{G}(y,U(y,s))\,dy\,ds,
\label{eq:def-Theta}
\end{align}
with
\[
\Phi(x,y,t) =
\begin{pmatrix}
\Phi_1(x,y,t) & 0\\
0 & \Phi_2(x,y,t)
\end{pmatrix}.
\]
Here, $\Phi_j(x,y,t)$ denotes the Green function associated with the operator $\partial_t - d_j\Delta + \lambda_j$ under Neumann boundary conditions on $\partial\Omega$.
We use the vector notation
\[
U(x,t)=\begin{pmatrix}u(x,t)\\ v(x,t)\end{pmatrix},
\qquad
U_0(x)=\begin{pmatrix}u_0(x)\\ v_0(x)\end{pmatrix},
\]
and define 
\[
\tilde{G}(x,U) :=
\begin{pmatrix}
\tilde{g}_1(x,u,v)\\
\tilde{g}_2(x,u,v)
\end{pmatrix}.
\]
where
\[
\tilde g_i(x,u,v) :=\left\{
\begin{aligned}
&g_i(x,u,v),\;\;\mbox{if}\;\; u>\tilde m_1,\; v>\tilde m_2,\\
&g_i(x,u,\tilde m_2),\;\;\mbox{if}\;\; u>\tilde m_1,\; v\le \tilde m_2,\\
&g_i(x,\tilde m_1,v),\;\;\mbox{if}\;\; u\le \tilde m_1,\;v>\tilde m_2,\\
&g_i(x,\tilde m_1,\tilde m_2),\;\;\mbox{if}\;\; u\le\tilde m_1,\; v\le\tilde m_2
\end{aligned}\right.
\]
for $i=1,2$. Then $\tilde{G} : \mathbb{R}^{n+2} \to \mathbb{R}^2$ is locally Lipschitz continuous.

By Proposition~\ref{prop:global-sol}, if $U_0\in\mathcal{X}$ then the solution
$U(x,t)$ satisfies
\[
U(x,t)\in [\tilde m_1,\tilde M_1]\times[\tilde m_2,\tilde M_2]
\quad \text{for all } x\in\Omega,\; t>0.
\]
Accordingly, we regard $\tilde{G}$ as a mapping
\[
\tilde{G}:\Omega\times[\tilde m_1,\tilde M_1]\times[\tilde m_2,\tilde M_2]
\to\mathbb{R}^2,
\]
and by Assumption~\ref{ass:coefficients}(f) we have
\[
\tilde{G}\in
W^{1,\infty}\big(
\Omega\times[\tilde m_1,\tilde M_1]\times[\tilde m_2,\tilde M_2];
\mathbb{R}^2
\big).
\]

We show a local convergence result for the Picard iteration associated with \eqref{uv-int}.

\begin{proposition}
\label{prop:local-sol}
Let $U_0\in\mathcal{X}$. Define the sequence $\{U^{(k)}\}_{k\in\mathbb{N}_0}$ by
\begin{equation}
\label{eq:Picard}
U^{(k)} := \Theta_{U_0}[U^{(k-1)}], \qquad
U^{(0)} \in L^\infty([0,\infty);L^\infty(\Omega)^2).
\end{equation}
Then there exists $T_0\in(0,1)$ such that
\[
\|U^{(k)}-U\|_{L^\infty([0,T_0];L^\infty(\Omega)^2)}
\le C\,\frac{1}{k!},
\]
where some positive $C>0$.
\end{proposition}

The proof of Proposition~\ref{prop:local-sol} is given in
Appendix~\ref{app:local-proof}.

\section{Neural operators and quantitative approximation theorem}

By the well-posedness results established in the previous section, we define a local-in-time solution operator on
$\mathcal{X}=\mathcal{X}(\alpha,m_1,m_2,M_1,M_2)$ by
\[
\Gamma^+ : \mathcal{X} \to L^{\infty}([0,T_0];L^{\infty}(\Omega)^2),
\qquad
U_0 \mapsto U,
\]
for some $T_0\in(0,1)$, where $U=(u,v)^\top$ denotes the solution to
\eqref{uv} with initial condition $U_0=(u_0,v_0)^\top$.
The goal of this section is to approximate the solution operator $\Gamma^+$
by a class of neural operators and to establish quantitative error bounds.

\subsection{Neural operator architectures}

Let $\{(\mu_m,\phi_m)\}_{m=1}^{\infty}$ denote the Neumann eigenpairs of
$-\Delta$ on $\Omega$, ordered as
$0=\mu_1<\mu_2\le\mu_3\le\cdots$.

\begin{definition}[Laplacian eigenfunction--based neural operator]
\label{def:neural-operator}
A neural operator
\[
\Gamma : L^{\infty}(\Omega)^2
\longrightarrow L^{\infty}([0,T_0];L^{\infty}(\Omega)^2)
\]
is defined by $\Gamma(U_0)=\tilde U^{(L+1)}$, where the output
$\tilde U^{(L+1)}$ is generated through the following layers.

\medskip

\noindent
1. {\bf (Input layer)} 
$\tilde{U}^{(1)}$ is given by 
\[
\tilde{U}^{(1)}(x,t):= (K^{(0)}U_{0})(x,t), 
\]
where 
$K^{(0)} : L^{\infty}(\Omega)^2 \to L^{\infty}([0,T_0], L^{\infty}(\Omega)^{d_1})$ is defined by 
\begin{align*}
& 
(K^{(0)} U_0)(x,t) 
\\
&
:= \sum_{m \leq N} C^{(0)} C_m(t) \left(\int_{\Omega} U_{0}(y) \phi_m(y) dy \right) \phi_m(x),
\end{align*}
where $C^{(0)} \in \mathbb{R}^{d_1 \times 2}$ and 
\[
C_{m}(t) := 
\begin{pmatrix}
e^{-t(d_1 \mu_m + \lambda_1)}  & 0 
\\
0 & e^{-t(d_2 \mu_m + \lambda_2)}
\end{pmatrix} \in \mathbb{R}^{2 \times 2}. 
\]
2. {\bf (Hidden layers)} 
    For $1 \le \ell \le L-1$, $\tilde{U}^{(\ell)}$ are iteratively given by
\begin{align*}
&
\tilde{U}^{(\ell+1)}(x,t)
\\
&
:= \sigma \left( W^{(\ell)} \tilde{U}^{(\ell)}(x,t) +  (K^{(\ell)} \tilde{U}^{(\ell)})(x,t) 
+ b^{(\ell)}
\right).
\end{align*}
\vspace{0mm}
\\
3. {\bf (Output layer)} 
$\tilde{U}^{(L+1)}$ is given by 
\begin{align*}
&\tilde{U}^{(L+1)}(x,t)
\\
& : = W^{(L)} \tilde{U}^{(L)}(x,t) + (K^{(L)}\tilde{U}^{(L)})(x,t)
+b^{(L)}.
\end{align*}
Here, $\sigma:\mathbb{R}\to\mathbb{R}$ denotes a nonlinear activation function
applied element-wise.
For each hidden layer $\ell$, $W^{(\ell)}\in\mathbb{R}^{d_{\ell+1}\times d_\ell}$
is a weight matrix and $b^{(\ell)}\in\mathbb{R}^{d_{\ell+1}}$ is a bias vector.
The operator
\[
K^{(\ell)}:
L^{\infty}([0,T_0];L^{\infty}(\Omega)^{d_\ell})
\to
L^{\infty}([0,T_0];L^{\infty}(\Omega)^{d_{\ell+1}})
\]
is a nonlocal operator defined by
\begin{align*}
& 
(K^{(\ell)} \tilde{U})(x,t) 
:=  \sum_{m \leq N} C^{(\ell,1)}
\\
&
\times 
\left(\int_0^{t} \int_{\Omega} C_{m}(t-s)C^{(\ell,2)} \tilde{U}(y,s) \phi_m(y) dyds\right) \phi_m(x), 
\end{align*}
where $C^{(\ell,1)}\in\mathbb{R}^{d_{\ell+1}\times 2}$ and
$C^{(\ell,2)}\in\mathbb{R}^{2\times d_\ell}$ are trainable matrices.
By construction, $d_{L+1}=2$, so that the output dimension matches the
two-component state variable $(u,v)$.

The collection of learnable parameters is given by
\[
\{W^{(\ell)},\, b^{(\ell)},\, C^{(0)},\, C^{(\ell,1)},\, C^{(\ell,2)}\}_{\ell=1}^L.
\]
We denote by $\mathcal{NO}^{L,H,N}_{\sigma}$ the class of neural operators
defined above, characterized by the depth $L$, the total number of neurons
$H=\sum_{\ell=1}^L d_\ell$, the spectral rank $N$, and the activation function
$\sigma$.
\end{definition}

This neural operator exploits the Laplacian eigenfunction basis through the nonlocal operators $K^{(\ell)}$. In contrast to standard FNOs, which are tailored to periodic domains, this architecture explicitly incorporates geometric information about the domain as well as Neumann boundary conditions.

\subsection{Main results}

We now state the main result. 
\begin{theorem}
\label{thm:main}
Let $n=1,2,3$, and let $\Omega\subset\mathbb{R}^n$ be a bounded  smooth domain.
Assume that Assumption~\ref{ass:coefficients} holds and let
$\beta \in (\frac{n}{4},1)$.
Then there exists $T_0\in(0,1)$ such that, for any $\epsilon\in(0,1)$, there
exists a neural operator
$\Gamma\in\mathcal{NO}^{L,H,N}_{\mathrm{ReLU}}$ satisfying
\[
\sup_{U_0\in\mathcal{X}}
\|\Gamma(U_0)-\Gamma^+(U_0)\|_{L^\infty_{t^\beta}([0,T_0];L^\infty(\Omega)^2)}
\le\epsilon.
\]
Moreover, the depth $L(\Gamma)$, the total number of neurons $H(\Gamma)$, and the spectral rank $N(\Gamma)$ satisfy
\[
L(\Gamma)\le C(\log\epsilon^{-1})^2,\qquad
H(\Gamma)\le C\epsilon^{-(n+2)}(\log\epsilon^{-1})^2,
\]
and
\[
N(\Gamma)\le C\epsilon^{-\frac{2n}{4\beta -n}},
\]
where some positive constant $C>0$ depending on $n,\Omega, \rho_1, \rho_2, \sigma_1, \sigma_2, d_1, d_2, p, q, r, s, m_1,m_2,M_1,M_2,\alpha,\beta$.
\end{theorem}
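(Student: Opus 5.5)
We will emulate the Picard iteration of Proposition~\ref{prop:local-sol} layer by layer. The error splits into three pieces, each to be made $O(\epsilon)$:
\[
\|\Gamma(U_0)-\Gamma^+(U_0)\|
\le \|U^{(k)}-U\| + \|U^{(k)}-U^{(k)}_N\| + \|U^{(k)}_N-\Gamma(U_0)\|,
\]
all measured in $L^\infty_{t^\beta}([0,T_0];L^\infty(\Omega)^2)$. Here $U^{(k)}_N$ denotes the Picard iterates built with the truncated Green function
\[
\Phi^N(x,y,t)=\sum_{m\le N} C_m(t)\phi_m(x)\phi_m(y),
\]
and starting from $U^{(0)}_N = K^{(0)}$-type data. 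We choose $U^{(0)}$ to be the free evolution $\int_\Omega\Phi U_0$, so that the input layer with $C^{(0)}=I$ realizes its truncation exactly.

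\textbf{Picard truncation.} By Proposition~\ref{prop:local-sol}, the first term is at most $C/k!$. Choosing $k\sim \log\epsilon^{-1}/\log\log\epsilon^{-1}\le C\log\epsilon^{-1}$ makes it at most $\epsilon/3$.

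\textbf{Spectral truncation.} For the second term, the key estimate is
\[
\|(\Phi(t)-\Phi^N(t))f\|_{L^\infty}\le C\, t^{-\beta}N^{-(4\beta-n)/(2n)}\|f\|_{L^\infty}.
\]
We will prove it via the Sobolev embedding $H^{2\beta}\subset L^\infty$, which holds since $4\beta>n$. Combined with Weyl's law $\mu_m\sim m^{2/n}$, this gives
\[
\Big\|\sum_{m>N}e^{-td\mu_m}\hat f_m\phi_m\Big\|_{H^{2\beta}}^2\le \sup_{m>N}\mu_m^{2\beta}e^{-2td\mu_m}\,\|f\|_{L^2}^2 .
\]
We then interpolate, trading part of the $t^{-\beta}$ factor for decay in $\mu_N$. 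I expect this step to be the main obstacle: getting exactly the exponent $\frac{2n}{4\beta-n}$ and the weight $t^\beta$ requires careful bookkeeping, possibly also using the $W^{2,\alpha}$ regularity of $U_0$ for the zeroth term.

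The singular weight $t^{-\beta}$ with $\beta<1$ is integrable in time. Propagating through the $k$ Picard steps uses the Lipschitz continuity of $\tilde G$ on the invariant box; since $T_0<1$, a Gronwall/factorial argument gives error growth bounded uniformly in $k$. Hence $N\sim\epsilon^{-2n/(4\beta-n)}$ suffices. We also need the truncated iterates to remain in a slightly enlarged box; for that we clamp via $\tilde G$'s extension and use uniform boundedness.

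\textbf{Network realization.} For the third term we approximate $\tilde G$ on the compact set $\Omega\times[\tilde m_1,\tilde M_1]\times[\tilde m_2,\tilde M_2]\subset\mathbb R^{n+2}$ by a ReLU network $\hat G$. For a $W^{1,\infty}$ target, Yarotsky-type results give accuracy $\delta$ with depth $O(\log\delta^{-1})$ and $O(\delta^{-(n+2)}\log\delta^{-1})$ neurons. The coordinate $x$ has to be supplied as an input feature. We plan to obtain it either via a bias/kernel trick or by noting that $\tilde G$'s $x$-dependence can be fed through an extra channel produced by $K$ acting on constants. Each Picard step is then implemented by a block of $O(\log\epsilon^{-1})$ layers: the hidden layers compute $\hat G(x,U^{(j)}_N)$, and the final layer of the block applies $K^{(\ell)}$ with $C^{(\ell,2)}$ selecting the $\hat G$ channels. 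The identity component $W^{(\ell)}$ carries forward the free term, using $\sigma(z)-\sigma(-z)=z$ for ReLU to pass values through.

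Errors $\delta$ in $\hat G$ propagate through the Lipschitz recursion with a constant independent of $k$ (again by $T_0<1$), so $\delta\sim\epsilon$ suffices. Stacking $k\sim\log\epsilon^{-1}$ blocks then gives depth $L\le C(\log\epsilon^{-1})^2$ and $H\le C\epsilon^{-(n+2)}(\log\epsilon^{-1})^2$, after absorbing the factor $k$ into the logarithms.
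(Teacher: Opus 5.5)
Your overall plan is the paper's: Picard iterates from Proposition~\ref{prop:local-sol}, the truncated kernel $\Phi_N$ realized by the $C_m(t)$-operators, the weight $t^\beta$ absorbing only the free-term singularity, a Yarotsky network for $\tilde G$, and $k\sim\log\epsilon^{-1}$ blocks of depth $O(\log\epsilon^{-1})$. However, the step you flagged is a genuine gap, and it is the step that produces the claimed rank. The estimate $\|(\Phi(t)-\Phi_N(t))f\|_{L^\infty}\lesssim t^{-\beta}N^{-\frac{4\beta-n}{2n}}\|f\|_{L^\infty}$ is true, but Sobolev embedding cannot deliver it. With $H^{2\beta}$, take $N$ large and evaluate at $t=\beta/(d\mu_{N+1})$. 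Your bound then gives $t^{2\beta}\sup_{\mu\ge\mu_{N+1}}\mu^{2\beta}e^{-2td\mu}=(\beta/(de))^{2\beta}$, so there is no decay in $N$ uniformly in $t$. Interpolating means working in $H^s$ with $n/2<s<2\beta$. Then $t^{2\beta}\mu^{s}e^{-2td\mu}\lesssim\mu^{s-2\beta}$ gives a rate $N^{-(2\beta-s)/n}$, hence $N\simeq\epsilon^{-n/(2\beta-s)}$. This is strictly worse than $\epsilon^{-\frac{2n}{4\beta-n}}$, because the exponent you want corresponds exactly to the endpoint $s=n/2$, where $H^s\not\subset L^\infty$. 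What is missing is the on-diagonal heat-kernel bound $\sup_x\Phi_j(x,x,t)\le Ct^{-n/2}$ for $t\in(0,T_0]$, which is the sharp substitute for that failing endpoint embedding. The paper uses $e^{-2td_j\mu_m}\le e^{-td_j\mu_{N+1}}e^{-td_j\mu_m}$ for $m>N$ to get
\[
\|K_{N,j}(x,\cdot,t)\|_{L^2_y}^2=\sum_{m>N}e^{-2t(d_j\mu_m+\lambda_j)}|\phi_m(x)|^2\le e^{-td_j\mu_{N+1}}\,\Phi_j(x,x,t)\lesssim e^{-td_j\mu_{N+1}}\,t^{-n/2}.
\]
Then $\|\cdot\|_{L^1_y}\le|\Omega|^{1/2}\|\cdot\|_{L^2_y}$ and Weyl's law give $\sup_x\|K_{N,j}(x,\cdot,t)\|_{L^1_y}\lesssim e^{-ctN^{2/n}}t^{-n/4}$, which is directly the $L^\infty\to L^\infty$ bound. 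Finally, $\sup_{t>0}t^{\beta-n/4}e^{-ctN^{2/n}}\simeq N^{-\frac{4\beta-n}{2n}}$ yields exactly $N\lesssim\epsilon^{-\frac{2n}{4\beta-n}}$.

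Two of your fallback ideas would also fail. First, the $W^{2,\alpha}$ regularity of $U_0$ cannot be used for the free term. $\mathcal{X}$ is bounded in $L^\infty$ but not in $W^{2,\alpha}$, so any such bound is not uniform over $\mathcal{X}$; this is precisely why the weight $t^\beta$ is needed. Second, $K^{(\ell)}$ applied to a spatially constant channel only sees $\phi_1$, since $\int_\Omega\phi_m\,dy=0$ for $m\ge 2$. It therefore returns a function of $t$ alone and cannot produce the coordinate $x$. The paper feeds $x$ in through an $x$-dependent bias $b^{(0)}(x,t)=(0,0,x)^\top$ in the input layer. This is a positional encoding, strictly beyond the constant biases of Definition~\ref{def:neural-operator}, and you need the same device. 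The rest of your plan is a legitimate variant. Propagating the weighted error directly through the Picard steps works because each step contributes an integrable $\int_0^t s^{-\beta}\,ds$ and the iterated integrals sum factorially. This replaces the paper's trick of moving the free term inside $\tilde G$ via $\widehat\Theta_{U_0}$ and contracting in the unweighted norm. Your clamping of $\tilde G$ and of the network input is a sound way to handle truncated iterates, which are not uniformly bounded near $t=0$; the paper passes over this point.
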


Theorem~\ref{thm:main} shows that the depth grows at most squared logarithmically, and the total number of neurons and the spectral rank grows at most polynomially as $\epsilon\to0$. We emphasize that the Laplacian eigenfunction--based neural operator avoids the
curse of parametric complexity, namely the exponential growth of parameters observed for general operator learning \cite{lanthaler2023curse}. This theoretical finding is further supported by numerical experiments, where we compare the Laplacian eigenfunction--based neural operator with standard FNOs.

\begin{remark}
We adopt an $L^\infty$-in-time norm to uniformly approximate nonlinear reaction–diffusion systems, and introduce a time-weighted factor $t^\beta$ to handle the weak singular behavior near $t=0$ inherent to parabolic dynamics.
The time-weighting parameter  $\beta$ governs the balance between approximation accuracy and the spectral complexity.
As $\beta$ approaches the critical exponent, the required spectral rank necessarily deteriorates, reflecting the difficulty of uniform-in-time approximation, at least within the present analytical framework.

\end{remark}

\subsection{Sketch of proof}

We outline the main ideas of the proof of Theorem~\ref{thm:main}; the complete
proof is deferred to Appendix~\ref{app:main-proof}.

The starting point is the integral formulation of the solution, which yields
the Picard iteration \eqref{eq:Picard}, i.e., 
\[
U^{(k)} = \Theta_{U_0}[U^{(k-1)}],
\]
where the operator $\Theta_{U_0}$ is defined in
\eqref{eq:def-Theta}. As $k\to\infty$, the iterates $U^{(k)}$ converge factorially fast to the exact solution (Proposition~\ref{prop:local-sol}). Hence, for sufficiently large $k\in\mathbb{N}$, $\Gamma^+(U_0)$ can be approximated by $U^{(k)}$.

Next, $U^{(k)}$ is approximated by $\tilde U^{(k)}$ defined by
\[
\tilde U^{(k)} := \Theta_{U_0,N,\mathrm{net}}[\tilde U^{(k-1)}],
\]
where the approximate operator $\Theta_{U_0,N,\mathrm{net}}$ is given by
\begin{align*}
&
\Theta_{U_0,N,\mathrm{net}}[\tilde{U}](x,t)
:= 
\int_{\Omega}\Phi_{N}(x,y,t)U_0(y)dy
\\
&
+
\int_{0}^{t}\int_{\Omega}\Phi_{N}(x, y, t-s )\tilde{G}_{\mathrm{net}}(y, \tilde{U}(y,s)) ds dy
\end{align*}
Here, the nonlinear term
$\tilde{G} : \mathbb{R}^3 \to \mathbb{R}^2$ is approximated by a ReLU neural network $\tilde G_{\mathrm{net}} : \mathbb{R}^3 \to \mathbb{R}^2$ using the approximation results of \cite{yarotsky2017error}.
Moreover, $\Phi_N$ denotes the truncated Laplacian spectral representation of the Green function $\Phi$, defined by 
\[
\Phi_N(x,y,t)
:=
\begin{pmatrix}
\Phi_{N,1}(x,y,t) & 0\\
0 & \Phi_{N,2}(x,y,t)
\end{pmatrix},
\]
where
\[
\Phi_{N,j}(x,y,t)
:=
\sum_{m=1}^{N} e^{-t(d_j\mu_m+\lambda_j)}\, \phi_m(x)\phi_m(y).
\]
Note that when $N \to \infty$, $\Phi_{N,j}$ converges to the Green function $\Phi_{j}$.


The estimates for the depth $L$ and the width $H$ follow from the close correspondence between layer-wise propagation in the neural operator and the Picard iteration scheme, following the strategy developed in \cite{furuya2024quantitative}.
While \cite{furuya2024quantitative} focused on scalar parabolic equations, the same philosophy extends to reaction--diffusion systems. 

A key novelty of the present work is the explicit evaluation of the required spectral rank $N$, which was not available in \cite{furuya2024quantitative}.
By introducing a time-weighted norm $t^{\beta}$ and exploiting the Laplacian eigenfunction expansion of the Green function, we are able to control its singular behavior near $t=0$ and derive explicit quantitative bounds on $N$.
The details are discussed as follows.

\begin{lemma}
\label{lem:rank-est}
Let $n=1,2,3$, and $\beta \in (\frac{n}{4},1]$. Then, for any $\epsilon\in(0,1)$, there exists
$N\in\mathbb{N}$ such that
\begin{align}
\sup_{x\in\Omega}
\|\Phi(x,\cdot,\cdot)-\Phi_N(x,\cdot,\cdot)\|_{L^1([0,T_0];L^1(\Omega)^{2\times 2})}
&\le \epsilon,
\label{eq:rank-esti-11}
\\
\sup_{x\in\Omega}\sup_{t\in[0,T_0]}
t^\beta \|\Phi(x,\cdot,t)-\Phi_N(x,\cdot,t)\|_{L^1(\Omega)^{2\times 2}}
&\le \epsilon.
\label{eq:rank-esti-22}
\end{align}
Moreover, the spectral rank $N$ satisfies
\[
N \lesssim \epsilon^{-\frac{2n}{4\beta-n}}.
\]
\end{lemma}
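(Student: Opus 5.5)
The plan is to handle each diagonal entry $\Phi_j-\Phi_{N,j}$ separately. For fixed $x$ we bound its $L^1(\Omega)$ norm in $y$ by the $L^2(\Omega)$ norm, using $|\Omega|<\infty$ and Cauchy--Schwarz. By orthonormality of the Neumann eigenfunctions,
\[
\|\Phi_j(x,\cdot,t)-\Phi_{N,j}(x,\cdot,t)\|_{L^2(\Omega)}^2=\sum_{m>N} e^{-2t(d_j\mu_m+\lambda_j)}|\phi_m(x)|^2
\le \sum_{m>N} e^{-2td\mu_m}|\phi_m(x)|^2,
\]
where $d=\min\{d_1,d_2\}$. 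Both estimates then reduce to controlling this spectral tail uniformly in $x$, with the weight $t^{\beta}$ in \eqref{eq:rank-esti-22}, or integrated over $t\in[0,T_0]$ in \eqref{eq:rank-esti-11}.

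Two standard facts on a bounded smooth domain will be used. The first is Weyl's law, $\mu_m\sim c\,m^{2/n}$, so $\mu_m\gtrsim m^{2/n}$ for $m\ge2$. The second is the local Weyl law, or Hörmander's sup-norm bound in its summed form: $\sum_{\mu_m\le \Lambda}|\phi_m(x)|^2\lesssim \Lambda^{n/2}$ uniformly in $x$. The second fact can alternatively be derived from the on-diagonal heat kernel bound $\Phi(x,x,t)\lesssim t^{-n/2}$ for $t\le1$. Writing $\mu_m\gtrsim m^{2/n}$ and summing by parts (or using a dyadic decomposition in $\Lambda$), we obtain, uniformly in $x$,
\[
R_N(x,t):=\sum_{m>N}e^{-2td\mu_m}|\phi_m(x)|^2\lesssim \int_{\mu_N}^\infty e^{-2td\Lambda}\,d(\Lambda^{n/2})\lesssim t^{-n/2}e^{-td\mu_N}.
\]
In particular this gives the two bounds $R_N\lesssim t^{-n/2}$ and $R_N\lesssim t^{-n/2-k}\mu_N^{-k}$ for any $k\ge0$.

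Next we optimize over $t$. For \eqref{eq:rank-esti-22} we need $t^{\beta}R_N(x,t)^{1/2}\lesssim t^{\beta-n/4}e^{-td\mu_N/2}$. Its supremum over $t>0$ occurs at $t\sim\mu_N^{-1}$ and equals $\mu_N^{-(\beta-n/4)}$. This quantity is finite and small precisely because $\beta>n/4$, which is where the hypothesis enters. Since $\mu_N\gtrsim N^{2/n}$, the bound becomes $N^{-\frac{2}{n}(\beta-\frac n4)}=N^{-\frac{4\beta-n}{2n}}$. Setting this equal to $\epsilon$ gives $N\sim\epsilon^{-\frac{2n}{4\beta-n}}$, which is the claimed rank.

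For \eqref{eq:rank-esti-11} we integrate $t^{-n/4}e^{-td\mu_N/2}$ over $[0,T_0]$. Since $n\le3$, we have $n/4<1$, so the integral is finite and bounded by $\mu_N^{-(1-n/4)}\lesssim N^{-\frac{4-n}{2n}}$. Because $\beta\le1$, this decays at least as fast as the rate in the weighted estimate, so the same $N$ works. Constants from $|\Omega|^{1/2}$ and the $2\times2$ matrix norm are absorbed.

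The main obstacle is the uniform-in-$x$ tail bound $R_N(x,t)\lesssim t^{-n/2}e^{-ct\mu_N}$. I would prove it by splitting $e^{-2td\mu_m}\le e^{-td\mu_N}e^{-td\mu_m}$ for $m>N$. This gives $R_N\le e^{-td\mu_N}\sum_m e^{-td\mu_m}|\phi_m(x)|^2=e^{-td\mu_N}\Phi^{(d)}(x,x,t/2)$-type quantities, the heat kernel of $e^{t d\Delta}$ on the diagonal at time comparable to $t$. The Neumann heat kernel bound $\lesssim t^{-n/2}+1$ for smooth bounded domains then finishes the argument. The argument is valid on $t\le T_0<1$, and for large $t$ the bound holds trivially.
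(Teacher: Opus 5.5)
Your proposal is correct and is essentially the paper's proof: you bound $L^1_y$ by $|\Omega|^{1/2}L^2_y$, use orthonormality to get the spectral tail, split $e^{-2td_j\mu_m}\le e^{-td_j\mu_{N+1}}e^{-td_j\mu_m}$ to reduce to the on-diagonal bound $\Phi_j(x,x,t)\lesssim t^{-n/2}$, and invoke Weyl's law. You then either take $\sup_t t^{\beta-n/4}e^{-ctN^{2/n}}$ or integrate $t^{-n/4}e^{-ctN^{2/n}}$, using $\beta\le 1$ so that the integrated rate $N^{\frac{n-4}{2n}}$ is dominated by $N^{\frac{n-4\beta}{2n}}$; the local-Weyl-law summation-by-parts variant you mention first is only an alternative derivation of the same tail bound and is not needed.
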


The proof of Lemma~\ref{lem:rank-est} is provided in
Appendix~\ref{app:rank-est}.

\section{Experiments}

\subsection{Set-up}

We consider the following parabolic system : 
\begin{subequations}
\begin{align*}
    &u_t = 0.1 \Delta u + 0.5 \Bigl( \dfrac{u^2}{(1+p u^2)\,v} - u \Bigr),
    \quad \text{in } \Omega \times (0,T), \\
    &v_t = 2.0 \Delta v + 0.5 \bigl( u^2 - 0.9\,v \bigr),
    \quad \text{in } \Omega \times [0,T],\\
    &\frac{\partial u}{\partial \nu}
    = \frac{\partial v}{\partial \nu}
    = 0, \quad \text{on } \partial \Omega \times [0,T], \\
    &u(x,0) = u_0(x),\quad
    v(x,0) = v_0(x),
    \quad \text{in } \overline{\Omega}, \\
    &\frac{\partial u_0}{\partial \nu}
    = \frac{\partial v_0}{\partial \nu}
    = 0,
    \quad \text{on } \partial \Omega. 
\end{align*}
\end{subequations}
where  
\[
x \in \Omega = [0,200]\times[0,200],
\qquad t\in[0,250].
\]
Our dataset consists of pairs of initial conditions and their numerical solutions ($900$ samples for training and, $100$ samples for testing). 
We generate random initial conditions by perturbing a steady equilibrium 
$(u^*,v^*)$ as
\begin{equation}
\label{eq:random-ic}
u_0 = u^* + 0.05\,\xi_u,
\qquad
v_0 = v^* + 0.05\,\xi_v,
\end{equation}
where $\xi_u$ and $\xi_v$ are sampled independently from a standard normal
distribution on the spatial grid (i.i.d.\ across grid points).
We parametrize the equilibrium by a scalar $s>0$:
\begin{equation*}
u^* = s,
\qquad
v^* = \frac{s^2}{0.9}.
\end{equation*}
Based on \cite{Song2017}, the parameter $p$ is chosen so that
\begin{equation*}
\frac{1}{1+p s^2} = \frac{s}{0.9}.
\end{equation*}
In our experiments, we mix three regimes by sampling
\begin{equation*}
s \in \{0.708,0.785,\ 0.85\}.
\end{equation*}

We discretize the spatial domain $\Omega$ using a uniform $200\times200$ grid.
In time, we employ a non-uniform discretization that is denser near the initial transient,
where the dynamics change most rapidly, and coarser afterward.
Specifically, we use the time grid
$
t \in \{0,1,2,\dots,50,100,150,200,250\}.
$
Moreover, following \cite{sakarvadia2025false}, we mix samples at both high and low spatial
resolutions with a ratio of $1{:}1$, and store the input coordinates in normalized form:
\begin{equation}
\label{eq:normalization}
(x,y)\mapsto \Bigl(\frac{x}{200},\frac{y}{200}\Bigr)\in[0,1]^2,
\qquad
t\mapsto \frac{t}{250}.
\end{equation}

Based on \cite{kovachki2023neural}, we learn a one-step mapping from the state to the next time state. More precisely, let 
\[
0<t_1<\cdots<t_k<t_{k+1}<\cdots<t_K<250
\]
be the chosen time, and we train a model $\Gamma$ to predict the next state from the current one 
\[
\Gamma\bigl(u_{t_k},v_{t_k},s\bigr)\approx (u_{t_{k+1}},v_{t_{k+1}}),
\]
where $s\in\{0.708,0.785,0.85\}$ denotes a conditioning input. In particular, we provide the regime
parameter $s\in\{0.708,0.785,0.85\}$ as the condition so that a single model can
learn three regimes, rather than training three separate models. The training loss is defined as the relative $L^2$ error:
\begin{equation*}
\mathcal{L}
=
\sum_{k=1}^{K}\frac{
 \bigl\|
\Gamma\bigl(u_{t_k},v_{t_k},s\bigr) - (u_{t_{k+1}},v_{t_{k+1}})
\bigr\|_{L^2(\Omega)^2}
}{
\|(u_{t_{k+1}},v_{t_{k+1}})\|_{L^2(\Omega)^2}
}.
\end{equation*}

At inference time, we perform an autoregressive rollout by repeatedly applying
the learned one-step map. Starting from $(\hat u_{t_0},\hat v_{t_0})=(u_0,v_0)$,
we update
\begin{equation*}
(\hat u_{t_{k+1}},\hat v_{t_{k+1}})
=
\Gamma(\hat u_{t_k},\hat v_{t_k},s),
\qquad k=0,1,\dots,K-1.
\end{equation*}

We compare two neural operator architectures that differ only in their spectral basis.
The first is the standard Fourier Neural Operator (FNO), which performs spectral convolutions using Fourier modes.
In the FNO implementation, Neumann boundary conditions are approximated by applying appropriate padding to the input fields prior to the Fourier transform.
The second replaces the Fourier basis with eigenfunctions of the Laplacian on $\Omega$ under Neumann boundary conditions. 
Since the spatial domain is rectangular, the Laplacian eigenfunctions under Neumann boundary conditions admit closed-form expressions, and we directly use these analytical eigenfunctions in the experiments.
We refer to the second model as the \emph{Laplacian Eigenfunction Neural Operator} (LENO).

For a fair comparison, we use identical backbone hyperparameters for both
methods: depth $=4$, channel width $=64$, and the
number of spectral modes $=96 \times 96$. All other training settings (optimizer $=$ Adam, learning rate $=1e-3$, batch size $=8$, and number of epochs $= 100$) are kept the same. 

\subsection{Results}

We compare the performance of the FNO baseline and the proposed LENO using (i) epoch-wise relative $\mathrm{L}^2$ errors on the training and test sets, (ii) spatio-temporal relative $\mathrm{L}^2$ errors over full temporal rollouts, and (iii) qualitative reconstructions of spatial patterns.

Figure~\ref{fig:loss-curves} shows the epoch-wise learning error curves for the activator $u$ and the inhibitor $v$.
In the late-epoch regime, LENO attains smaller test errors than FNO for both variables, indicating more accurate learning of the one-step mapping and improved generalization across trajectories.

To further assess long-term predictive performance, we report spatio-temporal relative $\mathrm{L}^2$ errors measured over full temporal rollouts.
Table~\ref{tab:rollout-uv} summarizes the errors for both $u$ and $v$, aggregated over space and time.
All results are reported as the mean and standard deviation over $10$ runs, with different training datasets.
Across all values of $s$, LENO consistently achieves lower rollout errors than FNO, demonstrating improved long-term stability and temporal generalization.

Figures~\ref{fig:qual-comparison} visualize predicted fields and
pointwise errors for $s=0.785$ at time $t=250$.
While FNO tends to produce overly smoothed predictions accompanied by artificial
stripe-like patterns, leading to structured residual errors, LENO more faithfully
reconstructs the underlying spatial patterns.
Additional qualitative results for the other cases ($s=0.708$ and $s=0.85$) are
provided in Appendix~\ref{app:add-ex}.

\begin{figure}[h]
  \centering
  \includegraphics[width=0.8\linewidth]{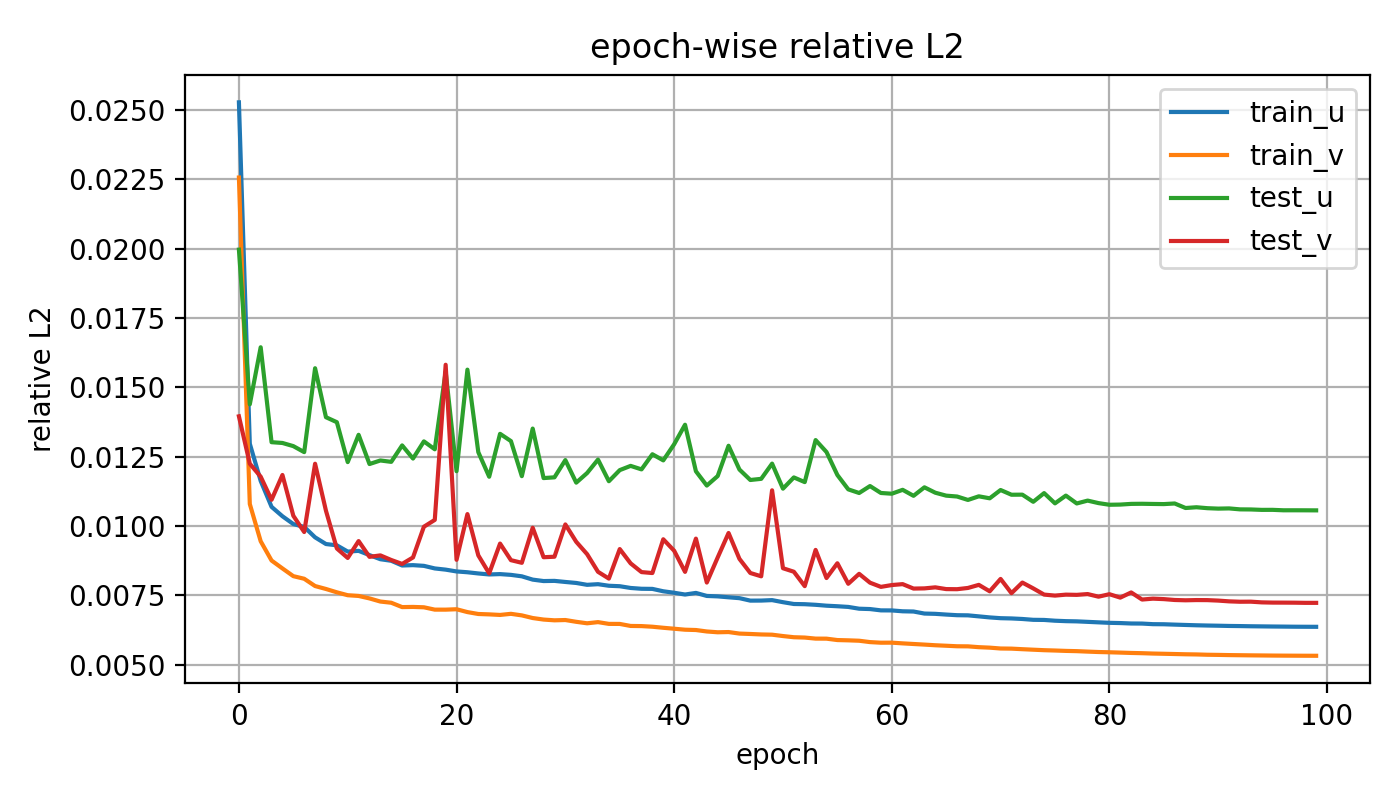}

  \vspace{-0.5em}

  \includegraphics[width=0.8\linewidth]{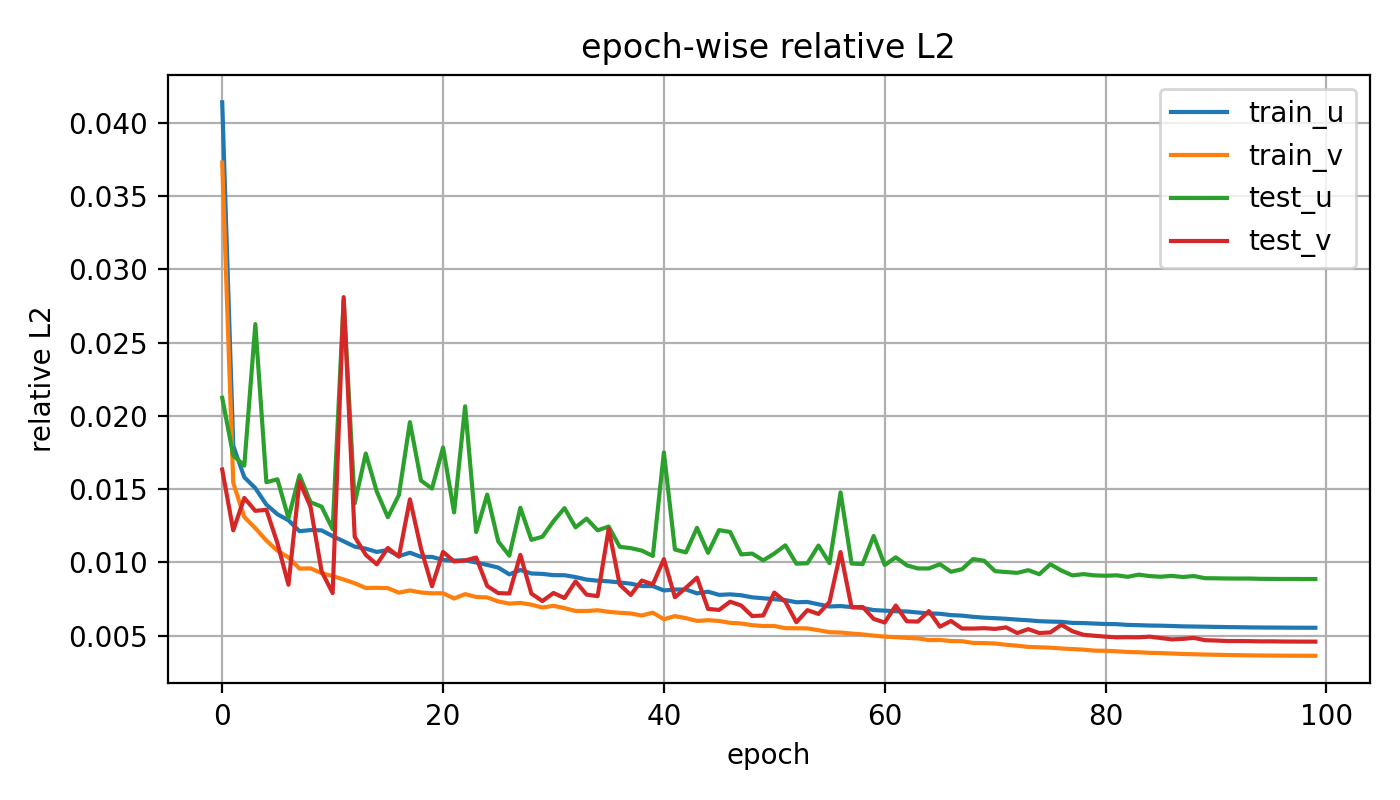}
  \caption{Epoch-wise error curves for the activator $u$ and inhibitor $v$.
  \textbf{Top:} FNO.
  \textbf{Bottom:} LENO.
  Each plot reports both training and test errors (legend: train\_u, train\_v, test\_u, test\_v).
  The horizontal axis is the epoch number.
  The vertical axis is the relative $\mathrm{L}^2$ error.}
  \label{fig:loss-curves}
\end{figure}

\begin{table*}[t]
\centering
\begin{tabular}{c|cc|cc|cc}
\hline
 & \multicolumn{6}{c}{$s$} \\
Model
 & \multicolumn{2}{c}{$0.708$}
 & \multicolumn{2}{c}{$0.785$}
 & \multicolumn{2}{c}{$0.85$} \\
 & $u$ & $v$ & $u$ & $v$ & $u$ & $v$ \\
\hline
LENO
 & $0.022 \pm 0.0017$ & $0.0079 \pm 0.00047$
 & $0.48 \pm 0.032$   & $0.15 \pm 0.023$
 & $0.66 \pm 0.013$   & $0.22 \pm 0.0084$ \\
FNO
 & $0.059 \pm 0.0051$ & $0.024 \pm 0.0021$
 & $0.83 \pm 0.0059$  & $0.40 \pm 0.047$
 & $0.95 \pm 0.0071$  & $0.34 \pm 0.0032$ \\
\hline
\end{tabular}
\caption{Spatio-temporal relative $\mathrm{L}^2$ errors over full rollouts
for the activator $u$ and inhibitor $v$.
Errors are reported as mean $\pm$ standard deviation, computed over $10$ runs with different training datasets.}
\label{tab:rollout-uv}
\end{table*}

\begin{figure*}[t]
  \centering
  \begin{subfigure}[t]{0.49\textwidth}
    \centering
    \includegraphics[width=\linewidth]{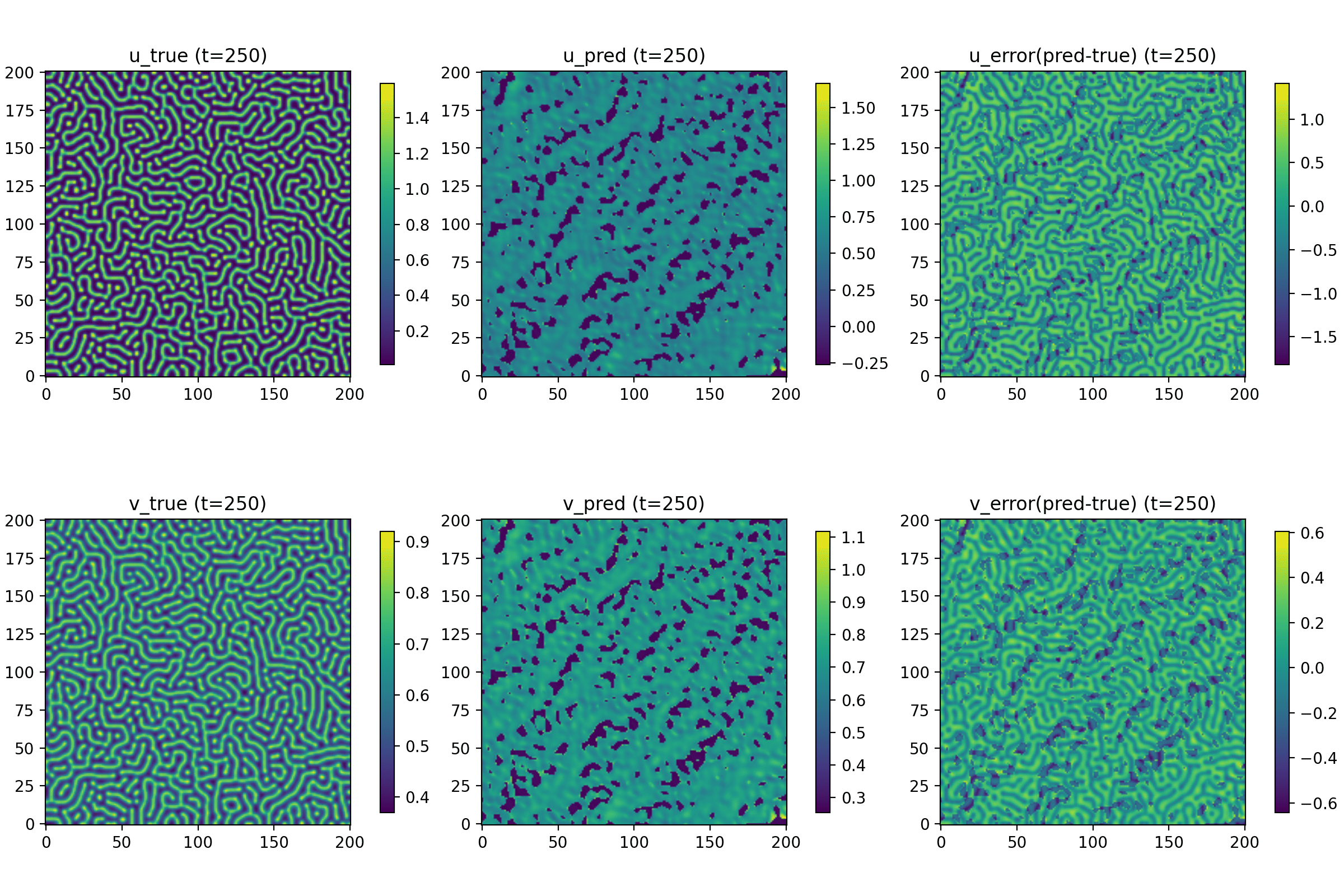}
    \caption{FNO.}
  \end{subfigure}\hfill
  \begin{subfigure}[t]{0.49\textwidth}
    \centering
    \includegraphics[width=\linewidth]{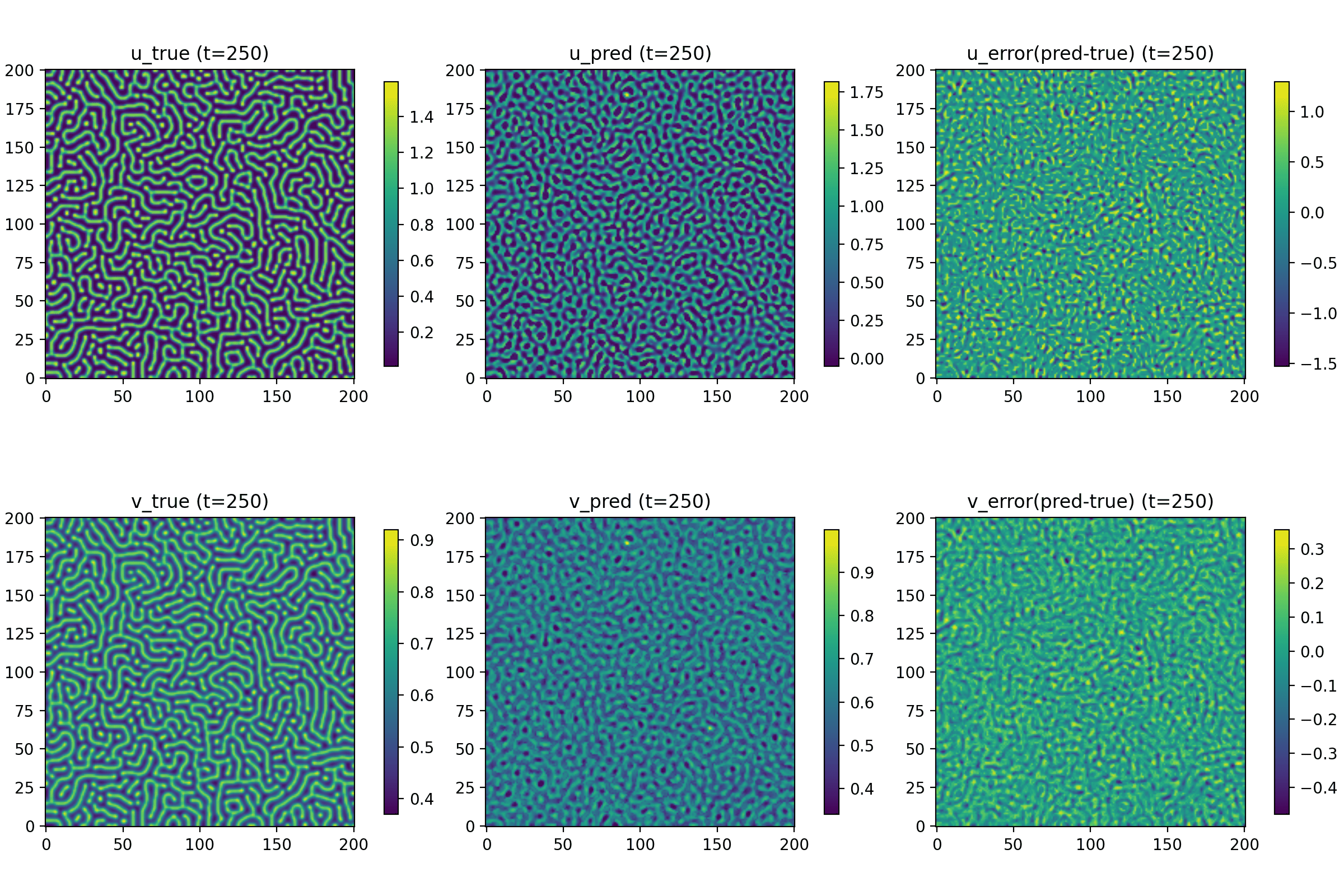}
    \caption{LENO.}
  \end{subfigure}

  \caption{Qualitative comparison of FNO and LENO for $s=0.785$ at $t=250$.
  Each panel shows the ground truth, prediction, and pointwise error
  for the activator $u$ (top) and inhibitor $v$ (bottom).}
  \label{fig:qual-comparison}
\end{figure*}

\section{Discussion}

In this work, we established an approximation theory for Laplacian-based neural operators applied to generalized Gierer–Meinhardt systems, by exploiting the spectral representation of the underlying Green functions. 
Our analysis highlights that eigenfunction expansions provide a natural and unifying framework for PDE-specific error estimates, suggesting that similar approximation results can be developed for a broad class of PDEs. 
An important direction for future work is to extend both the theory and the experiments to parametric settings, where physical coefficients such as diffusion coefficients $d_1, d_2$ are treated as inputs, enabling a single neural operator to learn diverse pattern-forming dynamics.

The present analysis reveals a deterioration of the required spectral rank as $\beta$ approaches the critical exponent. 
This behavior arises from the short-time estimates for the Green function and the associated spectral truncation.
While this deterioration is consistent with the difficulty of uniform-in-time approximation for nonlinear parabolic dynamics, we do not claim optimality of the resulting spectral rank bounds.
Improving these spectral rank estimates remains an important direction for future work.

\section*{Acknowledgments}
TF is supported by JSPS KAKENHI Grant Number JP24K16949, 25H01453, JST CREST JPMJCR24Q5, JST ASPIRE JPMJAP2329. JW is partially supported by the National Science and Technology Council of Taiwan, NSTC 112-2115-M-002-010-MY3.

\bibliography{ref}

@book{rothe2006global,
	author = {Rothe, Franz},
	publisher = {Springer},
	title = {Global solutions of reaction-diffusion systems},
	volume = {1072},
	year = {2006}}

@book{wei2013mathematical,
	author = {Wei, Juncheng and Winter, Matthias},
	publisher = {Springer Science \& Business Media},
	title = {Mathematical aspects of pattern formation in biological systems},
	volume = {189},
	year = {2013}}

@article{masuda1983global,
	author = {Masuda, Ky{\^u}ya},
	journal = {Hokkaido Mathematical Journal},
	number = {3},
	pages = {360--370},
	publisher = {Hokkaido University, Department of Mathematics},
	title = {On the global existence and asymptotic behavior of solutions of reaction-diffusion equations},
	volume = {12},
	year = {1983}}

@article{masuda1987reaction,
	author = {Masuda, Kyuya and Takahashi, Katsuo},
	journal = {Japan Journal of Applied Mathematics},
	pages = {47--58},
	publisher = {Springer},
	title = {Reaction-diffusion systems in the Gierer-Meinhardt theory of biological pattern formation},
	volume = {4},
	year = {1987}}

@article{jiang2006global,
	author = {Jiang, Huiqiang},
	journal = {Discrete and Continuous Dynamical Systems},
	number = {4},
	pages = {737},
	publisher = {Citeseer},
	title = {Global existence of solutions of an activator-inhibitor system},
	volume = {14},
	year = {2006}}

@book{ni2011mathematics,
	author = {Ni, Wei-Ming},
	publisher = {SIAM},
	title = {The mathematics of diffusion},
	year = {2011}}

@article{gierer1972theory,
	author = {Gierer, Alfred and Meinhardt, Hans},
	journal = {Kybernetik},
	pages = {30--39},
	publisher = {Springer},
	title = {A theory of biological pattern formation},
	volume = {12},
	year = {1972}}

@inproceedings{anandkumar2020neural,
	author = {Anandkumar, Anima and Azizzadenesheli, Kamyar and Bhattacharya, Kaushik and Kovachki, Nikola and Li, Zongyi and Liu, Burigede and Stuart, Andrew},
	booktitle = {ICLR 2020 Workshop on Integration of Deep Neural Models and Differential Equations},
	title = {Neural operator: Graph kernel network for partial differential equations},
	year = {2020}}

@article{lanthaler2023operator,
	author = {Lanthaler, Samuel},
	journal = {Journal of Machine Learning Research},
	number = {318},
	pages = {1--67},
	title = {Operator learning with PCA-Net: upper and lower complexity bounds},
	volume = {24},
	year = {2023}}

@article{marcati2023exponential,
	author = {Marcati, Carlo and Schwab, Christoph},
	journal = {SIAM Journal on Numerical Analysis},
	number = {3},
	pages = {1513--1545},
	publisher = {SIAM},
	title = {Exponential convergence of deep operator networks for elliptic partial differential equations},
	volume = {61},
	year = {2023}}

@article{kovachki2023neural,
	author = {Kovachki, Nikola B and Li, Zongyi and Liu, Burigede and Azizzadenesheli, Kamyar and Bhattacharya, Kaushik and Stuart, Andrew M and Anandkumar, Anima},
	journal = {J. Mach. Learn. Res.},
	number = {89},
	pages = {1--97},
	title = {Neural Operator: Learning Maps Between Function Spaces With Applications to {PDEs}.},
	volume = {24},
	year = {2023}}

@article{lanthaler2023nonlocal,
	author = {Lanthaler, Samuel and Li, Zongyi and Stuart, Andrew M},
	journal = {arXiv preprint arXiv:2304.13221},
	title = {The Nonlocal Neural Operator: Universal Approximation},
	year = {2023}}

@article{lu2019deeponet,
	author = {Lu, Lu and Jin, Pengzhan and Karniadakis, George Em},
	journal = {arXiv preprint arXiv:1910.03193},
	title = {Deeponet: Learning nonlinear operators for identifying differential equations based on the universal approximation theorem of operators},
	year = {2019}}

@article{lanthaler2022error,
	author = {Lanthaler, Samuel and Mishra, Siddhartha and Karniadakis, George E},
	journal = {Transactions of Mathematics and Its Applications},
	number = {1},
	pages = {tnac001},
	publisher = {Oxford University Press},
	title = {Error estimates for deeponets: A deep learning framework in infinite dimensions},
	volume = {6},
	year = {2022}}

@article{kratsios2024mixture,
	author = {Kratsios, Anastasis and Furuya, Takashi and Lassas, Matti and de Hoop, Maarten and others},
	journal = {arXiv preprint arXiv:2404.09101},
	title = {Mixture of Experts Soften the Curse of Dimensionality in Operator Learning},
	year = {2024}}

@article{lanthaler2023curse,
	author = {Lanthaler, Samuel and Stuart, Andrew M},
	journal = {arXiv preprint arXiv:2306.15924},
	title = {The curse of dimensionality in operator learning},
	year = {2023}}

@article{li2020fourier,
	author = {Li, Zongyi and Kovachki, Nikola and Azizzadenesheli, Kamyar and Liu, Burigede and Bhattacharya, Kaushik and Stuart, Andrew and Anandkumar, Anima},
	journal = {arXiv preprint arXiv:2010.08895},
	title = {Fourier neural operator for parametric partial differential equations},
	year = {2020}}

@article{tripura2022wavelet,
	author = {Tapas Tripura and Souvik Chakraborty},
	issn = {0045-7825},
	journal = {Computer Methods in Applied Mechanics and Engineering},
	pages = {115783},
	title = {Wavelet Neural Operator for solving parametric partial differential equations in computational mechanics problems},
	volume = {404},
	year = {2023}}

@article{chen2023learning,
	author = {Chen, Gengxiang and Liu, Xu and Meng, Qinglu and Chen, Lu and Liu, Changqing and Li, Yingguang},
	journal = {arXiv preprint arXiv:2302.08166},
	title = {Learning neural operators on {R}iemannian manifolds},
	year = {2023}}

@article{kovachki2021universal,
	author = {Kovachki, Nikola and Lanthaler, Samuel and Mishra, Siddhartha},
	journal = {Journal of Machine Learning Research},
	number = {290},
	pages = {1--76},
	title = {On universal approximation and error bounds for {F}ourier neural operators},
	volume = {22},
	year = {2021}}

@article{takamoto2022pdebench,
	author = {Takamoto, Makoto and Praditia, Timothy and Leiteritz, Raphael and MacKinlay, Daniel and Alesiani, Francesco and Pfl{\"u}ger, Dirk and Niepert, Mathias},
	journal = {Advances in Neural Information Processing Systems},
	pages = {1596--1611},
	title = {Pdebench: {A}n extensive benchmark for scientific machine learning},
	volume = {35},
	year = {2022}}

@article{deng2021convergence,
	author = {Beichuan Deng and Yeonjong Shin and Lu Lu and Zhongqiang Zhang and George Em Karniadakis},
	issn = {0893-6080},
	journal = {Neural Networks},
	pages = {411-426},
	title = {Approximation rates of DeepONets for learning operators arising from advection--diffusion equations},
	volume = {153},
	year = {2022}}

@article{chen1995universal,
	author = {Chen, Tianping and Chen, Hong},
	journal = {IEEE transactions on neural networks},
	number = {4},
	pages = {911--917},
	publisher = {IEEE},
	title = {Universal approximation to nonlinear operators by neural networks with arbitrary activation functions and its application to dynamical systems},
	volume = {6},
	year = {1995}}

@article{bhattacharya2021model,
	author = {Bhattacharya, Kaushik and Hosseini, Bamdad and Kovachki, Nikola B and Stuart, Andrew M},
	journal = {The SMAI journal of computational mathematics},
	pages = {121--157},
	title = {Model reduction and neural networks for parametric PDEs},
	volume = {7},
	year = {2021}}

@article{li2020neural,
	author = {Li, Zongyi and Kovachki, Nikola and Azizzadenesheli, Kamyar and Liu, Burigede and Bhattacharya, Kaushik and Stuart, Andrew and Anandkumar, Anima},
	journal = {arXiv preprint arXiv:2003.03485},
	title = {Neural operator: Graph kernel network for partial differential equations},
	year = {2020}}

@inproceedings{bonev2023spherical,
	author = {Bonev, Boris and Kurth, Thorsten and Hundt, Christian and Pathak, Jaideep and Baust, Maximilian and Kashinath, Karthik and Anandkumar, Anima},
	booktitle = {International conference on machine learning},
	organization = {PMLR},
	pages = {2806--2823},
	title = {Spherical fourier neural operators: Learning stable dynamics on the sphere},
	year = {2023}}

@article{gupta2021multiwavelet,
	author = {Gupta, Gaurav and Xiao, Xiongye and Bogdan, Paul},
	journal = {Advances in neural information processing systems},
	pages = {24048--24062},
	title = {Multiwavelet-based operator learning for differential equations},
	volume = {34},
	year = {2021}}

@article{chen2023deep,
	author = {Ke Chen and Chunmei Wang and Haizhao Yang},
	issn = {2835-8856},
	journal = {Transactions on Machine Learning Research},
	title = {Deep Operator Learning Lessens the Curse of Dimensionality for {PDE}s},
	year = {2023}}

@article{furuya2024quantitative,
	author = {Furuya, Takashi and Taniguchi, Koichi and Okuda, Satoshi},
	journal = {arXiv preprint arXiv:2410.02151},
	title = {Quantitative Approximation for Neural Operators in Nonlinear Parabolic Equations},
	year = {2024}}

@article{yarotsky2017error,
  title={Error bounds for approximations with deep ReLU networks},
  author={Yarotsky, Dmitry},
  journal={Neural networks},
  volume={94},
  pages={103--114},
  year={2017},
  publisher={Elsevier}
}

@article{hao2025laplacian,
  title={Laplacian eigenfunction-based neural operator for learning nonlinear partial differential equations},
  author={Hao, Wenrui and Wang, Jindong},
  journal={arXiv preprint arXiv:2502.05571},
  year={2025}
}

@article{sakarvadia2025false,
  title={The false promise of zero-shot super-resolution in machine-learned operators},
  author={Sakarvadia, Mansi and Hegazy, Kareem and Totounferoush, Amin and Chard, Kyle and Yang, Yaoqing and Foster, Ian and Mahoney, Michael W},
  journal={arXiv preprint arXiv:2510.06646},
  year={2025}
}

@article{Song2017,
title = {Pattern dynamics in a Gierer–Meinhardt model with a saturating term},
journal = {Applied Mathematical Modelling},
volume = {46},
pages = {476-491},
year = {2017},
issn = {0307-904X},
author = {Yongli Song and Rui Yang and Guiquan Sun},
doi = {https://doi.org/10.1016/j.apm.2017.01.081},
}
\bibliographystyle{icml2026}

\newpage

\appendix

\onecolumn

\section{Proof of Proposition~\ref{prop:local-sol}}
\label{app:local-proof}

We follow the idea described in \citep[Part~II]{rothe2006global}.
Let $U_0\in\mathcal{X}$. We choose a constant $C_1>0$ such that
\begin{equation*}
C_0
:=
\left\|
\int_{\Omega}\Phi(\cdot,y,\cdot)U_0(y)\,dy
\right\|_{L^\infty([0,\infty);L^\infty(\Omega)^2)}
\le C_1.
\end{equation*}
We define the set
\[
\mathcal{B}:=\bar{\Omega}\times[0,C_1]\times[0,C_1].
\]
Then $\tilde{G}:\mathcal{B}\to\mathbb{R}_+^2$ is well-defined, and there exists
a constant $B>0$ such that
\begin{equation}\label{kk1}
|\tilde{G}(x,U)|\le B
\qquad\text{for all }(x,U)\in\mathcal{B},
\end{equation}
and
\begin{equation}\label{kk2}
|\tilde{G}(x,U)-\tilde{G}(x,V)|
\le B|U-V|
\qquad\text{for all }(x,U),(x,V)\in\mathcal{B}.
\end{equation}

We now choose $T_0\in(0,1)$ such that
\begin{equation}\label{tt}
C_0+e^{B T_0}-1<C_1.
\end{equation}

For $k\in\mathbb{N}$, we define
\[
\eta_k(t)
:=
\|U^{(k+1)}(\cdot,t)-U^{(k)}(\cdot,t)\|_{L^\infty(\Omega)^2}.
\]
Under assumptions \eqref{kk1}, \eqref{kk2}, and \eqref{tt}, the following
estimates (see \citep[p.~113]{rothe2006global}) hold for all
$t\in[0,T_0]$ and $k\in\mathbb{N}$:
\begin{align}
&\|U^{(k)}(\cdot,t)\|_{L^\infty(\Omega)^2}\le C_0, \nonumber\\
&\eta_k(t)\le B\int_0^t\eta_{k-1}(s)\,ds, \nonumber\\
&\eta_k(t)\le\frac{(Bt)^k}{k!}, \nonumber\\
&\sum_{1\le k\le j}\eta_k(t)\le e^{Bt}-1
\qquad\text{for any }j\in\mathbb{N}. \label{estq4}
\end{align}

Therefore, $\{U^{(k)}\}_{k\in\mathbb{N}}$ is a Cauchy sequence in
$L^\infty([0,T_0];L^\infty(\Omega)^2)$ and converges to a limit
$U(x,t)=(u(x,t),v(x,t))$, which is the unique solution of
\eqref{uv-int}.
Moreover, by \eqref{estq4}, the approximation error satisfies
\begin{equation*}
\|U^{(k)}-U\|_{L^\infty([0,T_0];L^\infty(\Omega)^2)}
\le
C\,\frac{1}{k!},
\end{equation*}
for some constant $C>0$ depending only on $BT_0$.
\qed

\section{Proof of Theorem \ref{thm:main}}
\label{app:main-proof}

Let $\epsilon \in (0,1)$, and let $U_0 \in \mathcal{X}$. In what follows, we use the notation $\lesssim$ to denote an inequality up to a multiplicative constant independent of $\epsilon$.

\medskip\medskip\medskip

\noindent{\bf Step 1.}
We define the operators
\[
\Theta_{U_0}, \ \widehat{\Theta}_{U_0}
:
L^\infty([0,T_0];L^\infty(\Omega)^2)
\to
L^\infty([0,T_0];L^\infty(\Omega)^2)
\]
by
\begin{equation*}
\Theta_{U_0}[U](x,t)
:=
\int_{\Omega}\Phi(x,y,t)U_0(y)\,dy
+
\int_{0}^{t}\int_{\Omega}\Phi(x,y,t-s)\tilde{G}(y,U(y,s))\,dy\,ds,
\end{equation*}
and
\begin{equation*}
\widehat{\Theta}_{U_0}[U](x,t)
:=
\int_{0}^{t}\int_{\Omega}\Phi(x,y,t-s)
\tilde{G}\!\left(
y,
U(y,s)+\int_{\Omega}\Phi(y,z,s)U_0(z)\,dz
\right)
\,dy\,ds.
\end{equation*}

Let $\{(\mu_m,\phi_m)\}_{m=1}^\infty$ be the Neumann eigenpairs of $-\Delta$ on
$\Omega$, ordered as
\[
0=\mu_1<\mu_2\le\mu_3\le\cdots.
\]
Then, for $j=1,2$, the heat kernel $\Phi_j$ admits the expansion
\[
\Phi_j(x,y,t)
=
\sum_{m=1}^{\infty} e^{-t(d_j\mu_m+\lambda_j)}\phi_m(x)\phi_m(y),
\qquad t>0.
\]

Next, we introduce truncated operators
\[
\Theta_{U_0,N}, \ \widehat{\Theta}_{U_0,N}
:
L^\infty([0,T_0];L^\infty(\Omega)^2)
\to
L^\infty([0,T_0];L^\infty(\Omega)^2)
\]
defined by
\begin{equation*}
\Theta_{U_0,N}[U](x,t)
:=
\int_{\Omega}\Phi_N(x,y,t)U_0(y)\,dy
+
\int_{0}^{t}\int_{\Omega}\Phi_N(x,y,t-s)\tilde{G}(y,U(y,s))\,dy\,ds,
\end{equation*}
and
\begin{equation*}
\widehat{\Theta}_{U_0,N}[U](x,t)
:=
\int_{0}^{t}\int_{\Omega}\Phi_N(x,y,t-s)
\tilde{G}\!\left(
y,
U(y,s)+\int_{\Omega}\Phi_N(y,z,s)U_0(z)\,dz
\right)
\,dy\,ds,
\end{equation*}
where
\[
\Phi_N(x,y,t)
=
\begin{pmatrix}
\Phi_{N,1}(x,y,t) & 0 \\
0 & \Phi_{N,2}(x,y,t)
\end{pmatrix},
\qquad
\Phi_{N,j}(x,y,t)
=
\sum_{m=1}^{N} e^{-t(d_j\mu_m+\lambda_j)}\phi_m(x)\phi_m(y).
\]

By Lemma~\ref{lem:rank-est}, there exists $N\in\mathbb{N}$ satisfying
\[
N \lesssim \epsilon^{-\frac{2n}{4\beta-n}}
\]
such that
\begin{align}
\sup_{x\in\Omega}
\|\Phi(x,\cdot,\cdot)-\Phi_N(x,\cdot,\cdot)\|_{L^1([0,T_0];L^1(\Omega)^{2\times 2})}
\le \epsilon,
\quad
\sup_{x\in\Omega}\sup_{t\in[0,T_0]}
t^\beta
\|\Phi(x,\cdot,t)-\Phi_N(x,\cdot,t)\|_{L^1(\Omega)^{2\times 2}}
&\le \epsilon.
\label{eq:step-1}
\end{align}

We now prove the following lemma.

\begin{lemma}
\label{lem:Theta-difference}
Let $M >0$. Then it holds that for $U \in B_{L^\infty([0,T_0], L^\infty(\Omega)^2)}(M)$ 
\[
\|\widehat{\Theta}_{U_0}[U]
-
\widehat{\Theta}_{U_0,N}[U]\|_{L^\infty([0,T_0];L^\infty(\Omega)^2)}
\lesssim \epsilon.
\]
\end{lemma}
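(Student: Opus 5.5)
We add and subtract the mixed term in which the outer kernel is truncated but the inner one is not. Write $W(y,s):=\int_\Omega\Phi(y,z,s)U_0(z)\,dz$ and $W_N(y,s):=\int_\Omega\Phi_N(y,z,s)U_0(z)\,dz$. Then
\begin{align*}
\widehat{\Theta}_{U_0}[U]-\widehat{\Theta}_{U_0,N}[U]
&=\int_0^t\int_\Omega\bigl(\Phi-\Phi_N\bigr)(x,y,t-s)\,\tilde G\bigl(y,U+W\bigr)\,dy\,ds\\
&\quad+\int_0^t\int_\Omega\Phi_N(x,y,t-s)\bigl[\tilde G(y,U+W)-\tilde G(y,U+W_N)\bigr]\,dy\,ds
=:I_1+I_2 .
\end{align*}
We estimate the two pieces separately, uniformly in $x\in\Omega$ and $t\in[0,T_0]$.

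For $I_1$ we use the $L^\infty$ bounds first. We have $\|U\|_{L^\infty}\le M$, and $\|W\|_{L^\infty}\le \|\Phi_j(x,\cdot,t)\|_{L^1}\|U_0\|_{L^\infty}\le M_1+M_2$, since the Neumann kernel is nonnegative with mass $e^{-\lambda_j t}\le 1$. So the argument of $\tilde G$ lies in a fixed compact set, and $|\tilde G|\le B_M$ there. (The truncation built into $\tilde G$ makes it bounded and Lipschitz on bounded sets; this replaces $\mathcal B$ from Appendix A with a larger box depending on $M$.) Hence $|I_1(x,t)|\le B_M\int_0^{t}\|\Phi(x,\cdot,\tau)-\Phi_N(x,\cdot,\tau)\|_{L^1}\,d\tau$, and the first estimate \eqref{eq:step-1} gives $|I_1|\le B_M\epsilon$.

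For $I_2$ we use the Lipschitz bound of $\tilde G$ on the same box, which gives $|\tilde G(y,U+W)-\tilde G(y,U+W_N)|\le B_M|W-W_N|(y,s)$. Before applying it we must check that $W_N$ also stays in a bounded set; we can secure this by enlarging the box, as explained below. From the second estimate \eqref{eq:step-1},
\[
|W-W_N|(y,s)\le \|U_0\|_{L^\infty}\,\|\Phi(y,\cdot,s)-\Phi_N(y,\cdot,s)\|_{L^1}\le (M_1+M_2)\,\epsilon\, s^{-\beta}.
\]
It remains to bound $\sup_{x,\tau}\|\Phi_N(x,\cdot,\tau)\|_{L^1}$ uniformly in $N$. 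We write $\Phi_N=\Phi-(\Phi-\Phi_N)$. On $\tau\ge \tau_0$ we use the smoothing of the spectral tail. Near $\tau=0$ only the $L^1$-in-time estimate is available, so the product structure needs care. The clean route is to keep $\Phi$ and $\Phi_N-\Phi$ separate:
\begin{align*}
|I_2|
&\le B_M(M_1+M_2)\,\epsilon\Bigl[\int_0^t\|\Phi(x,\cdot,t-s)\|_{L^1}\,s^{-\beta}\,ds\\
&\qquad+\int_0^t\|(\Phi-\Phi_N)(x,\cdot,t-s)\|_{L^1}\,s^{-\beta}\,ds\Bigr].
\end{align*}
The first integral is at most $\int_0^t s^{-\beta}\,ds=t^{1-\beta}/(1-\beta)$, which is finite because $\beta<1$. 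For the second we use the second estimate of \eqref{eq:step-1} again, giving $\le\epsilon\int_0^t (t-s)^{-\beta}s^{-\beta}\,ds=\epsilon\, t^{1-2\beta}B(1-\beta,1-\beta)$. This is harmless when $\beta\le1/2$. When $\beta>1/2$ we instead bound $\|\Phi-\Phi_N\|_{L^1}\le \|\Phi\|_{L^1}+\|\Phi_N\|_{L^1}$ and use a crude $N$-dependent-free bound for $\Phi_N$ from the eigenfunction expansion on $[0,T_0]$.

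For the box enlargement we use the $L^\infty$ bound $|W_N|\le |W|+(M_1+M_2)\epsilon s^{-\beta}$. This is unbounded as $s\to0$, so for the Lipschitz step we restrict to $s\ge\epsilon^{1/\beta}$ and treat $s<\epsilon^{1/\beta}$ by the sup bound $2B_M$ times a short time interval. That interval contributes $\lesssim\epsilon^{1/\beta}\le\epsilon$, weighted by the $L^1$ norm of $\Phi_N$.

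The main obstacle is exactly this last point: obtaining a uniform-in-$N$ bound on $\sup_x\|\Phi_N(x,\cdot,\tau)\|_{L^1(\Omega)}$, or circumventing it. First I would try to deduce it from the second estimate of \eqref{eq:step-1} as $\|\Phi_N\|_{L^1}\le 1+\epsilon\tau^{-\beta}$, which is integrable in $\tau$. Substituting this into both the $I_2$ bound and the short-time cutoff, with the Beta-integral computation above, then yields $\|\widehat\Theta_{U_0}[U]-\widehat\Theta_{U_0,N}[U]\|\lesssim\epsilon$ with constants depending only on $M$, $M_1$, $M_2$, $\beta$ and $T_0$.
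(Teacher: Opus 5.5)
Your decomposition is the reverse of the paper's, and as written the argument does not close. Putting the truncated kernel $\Phi_N$ against the Lipschitz difference in $I_2$ forces you to control $\sup_x\|\Phi_N(x,\cdot,\tau)\|_{L^1(\Omega)}$. There is no $N$-uniform bound for this as $\tau\to0$: at $\tau=0$ it equals $\|P_N\|_{L^\infty\to L^\infty}$, the Lebesgue constant of the spectral projection, which grows with $N$ (like $\log N$ already for $n=1$). So the ``crude $N$-independent bound for $\Phi_N$'' you invoke for $\beta>1/2$ is false. The substitute $\|\Phi_N(x,\cdot,\tau)\|_{L^1}\le1+\epsilon\tau^{-\beta}$ is correct. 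However, inserted into the integral over $[0,t]$ it gives exactly your term $\epsilon^2t^{1-2\beta}B(1-\beta,1-\beta)$, which is unbounded as $t\to0$ for $\beta>1/2$. That is every admissible $\beta$ when $n=2,3$. So appealing to ``the Beta-integral computation above'' in your last paragraph does not give a bound uniform in $t$. There is a second problem. On $s<\epsilon^{1/\beta}$ you bound $|\tilde G(y,U+W)-\tilde G(y,U+W_N)|$ by $2B_M$. But by your own estimate $|W_N|\le|W|+C\epsilon s^{-\beta}$, the point $U+W_N$ leaves every fixed box there, and $\tilde G$ (truncated only from below) is unbounded above. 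This step needs $\tilde G$ globally bounded and Lipschitz. The paper takes this for granted by treating $\tilde G$ as a $W^{1,\infty}$ function, e.g.\ after also clamping at $\tilde M_1,\tilde M_2$, which is harmless since the true solution stays in that box.

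The missing idea is never to let $\Phi_N$ stand alone. The paper splits the difference into two pieces:
$(1)=\int_0^t\!\int_\Omega\Phi(x,y,t-s)[\tilde G(y,U+W)-\tilde G(y,U+W_N)]\,dy\,ds$ and $(2)=\int_0^t\!\int_\Omega(\Phi-\Phi_N)(x,y,t-s)\,\tilde G(y,U+W_N)\,dy\,ds$.
In (1) only the positivity and unit mass of $\Phi$ enter, together with the tail bound \eqref{est-22}, $|W-W_N|(y,s)\lesssim e^{-csN^{2/n}}s^{-n/4}$. This is integrable because $n\le3$ and gives $|(1)|\lesssim N^{(n-4)/(2n)}\lesssim\epsilon$; your $\epsilon s^{-\beta}$ bound would work equally well there. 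In (2) the nonlinearity is bounded by its sup rather than its Lipschitz constant, and the time-integrated estimate \eqref{eq:rank-esti-11} finishes. Equivalently, in your $I_2$ write $\Phi_N=\Phi-(\Phi-\Phi_N)$ and estimate the bracket against $\Phi-\Phi_N$ by $2\sup|\tilde G|$ and the first bound of \eqref{eq:step-1}. Your own cutoff would also rescue the cross term. For $s\ge\epsilon^{1/\beta}$ one has $\epsilon s^{-\beta}\le1$, so $\epsilon(t-s)^{-\beta}\cdot\epsilon s^{-\beta}\le\epsilon(t-s)^{-\beta}$, whose integral is $\lesssim\epsilon$ uniformly in $t$. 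The piece $s<\epsilon^{1/\beta}$, however, still requires the global bound on $\tilde G$.
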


\begin{proof}
We decompose
\begin{align*}
&
\widehat{\Theta}_{U_0}[U](x,t)
-
\widehat{\Theta}_{U_0,N}[U](x,t)
\\
&=
\int_0^t\!\!\int_\Omega
\Phi(x,y,t-s)
\Bigl[
\tilde{G}\!\left(y,U(y,s)+\int_\Omega\Phi(y,z,s)U_0(z)\,dz\right)
\\[-0.2em]
&\hspace{6em}
-
\tilde{G}\!\left(y,U(y,s)+\int_\Omega\Phi_N(y,z,s)U_0(z)\,dz\right)
\Bigr]
\,dy\,ds
\\
&\quad+
\int_0^t\!\!\int_\Omega
\bigl(\Phi(x,y,t-s)-\Phi_N(x,y,t-s)\bigr)
\\[-0.2em]
&\hspace{6em}
\times
\tilde{G}\!\left(y,U(y,s)+\int_\Omega\Phi_N(y,z,s)U_0(z)\,dz\right)
\,dy\,ds
\\
&=:(1)+(2).
\end{align*}

For the first term, using the Lipschitz continuity of $\tilde{G}$, we obtain from \eqref{est-22} that
\begin{equation}\label{est-L}
\begin{aligned}
&\sup_{y\in\Omega}\left|\tilde{G}\left(y,U(y,s)+\int_\Omega\Phi(y,z,s)U_0(z)\,dz\right)-\tilde{G}\left(y,U(y,s)+\int_\Omega\Phi_N(y,z,s)U_0(z)\,dz\right)\right|\\
\lesssim &\,e^{-c t d_j N^{2/n}}\,t^{-n/4}.
\end{aligned}
\end{equation}
Combining \eqref{est-L} and the semigroup estimate with kernel $\Phi_j(x,y,t)$, see \citep[Proposition~2.1]{masuda1987reaction}, \citep[Proposition~1]{jiang2006global} or \citep[Lemma~7]{masuda1983global}, we have
\begin{equation*}
|(1)|\lesssim \int_0^{T_0} e^{- \frac{s}{2} N^{2/n} } (s/2)^{-\frac{n}{4}} ds \lesssim N^{\frac{n-4}{2n}}
\lesssim \epsilon.
\end{equation*}


For the second term, \eqref{eq:step-1} yields
\begin{align*}
|(2)|
&\le
\|\tilde{G}\|_{W^{1,\infty}}
\int_0^t\!\!\int_\Omega
|\Phi(x,y,t-s)-\Phi_N(x,y,t-s)|
\,dy\,ds
\lesssim
\epsilon.
\end{align*}
This completes the proof.
\end{proof}

\medskip\medskip\medskip

\noindent{\bf Step 2.}
We choose a large compact set $\tilde{\Omega}\subset \mathbb{R}^{n+2}$.
On $\tilde{\Omega}$, the map $\tilde{G}:\tilde{\Omega}\to\mathbb{R}^2$ satisfies
\[
\tilde{G}\in W^{1,\infty}(\tilde{\Omega};\mathbb{R}^2).
\]
Applying the approximation result of \citep[Theorem~1]{yarotsky2017error},
there exists a ReLU neural network
$\tilde{G}_{\mathrm{net}}:\mathbb{R}^n\times\mathbb{R}^2\to\mathbb{R}^2$
such that
\begin{equation}
\label{eq:approx-NN-nonlinearity}
\|\tilde{G}-\tilde{G}_{\mathrm{net}}\|_{L^\infty(\tilde{\Omega})}
\le \epsilon,
\end{equation}
where the depth $L(\tilde{G}_{\mathrm{net}})$ and the number of neurons
$H(\tilde{G}_{\mathrm{net}})$ satisfy
\begin{equation}\label{eq:quanti-esti-nonlinearity}
L(\tilde{G}_{\mathrm{net}})\lesssim \log(\epsilon^{-1}),
\qquad
H(\tilde{G}_{\mathrm{net}})\lesssim \epsilon^{-(n+2)}\log(\epsilon^{-1}).
\end{equation}

We now define
\[
\Theta_{U_0,N,\mathrm{net}},\ 
\widehat{\Theta}_{U_0,N,\mathrm{net}}
:
L^\infty([0,T_0];L^\infty(\Omega)^2)
\to
L^\infty([0,T_0];L^\infty(\Omega)^2)
\]
by
\begin{equation*}
\Theta_{U_0,N,\mathrm{net}}[U](x,t)
:=
\int_\Omega \Phi_N(x,y,t)U_0(y)\,dy
+
\int_0^t\!\!\int_\Omega
\Phi_N(x,y,t-s)\tilde{G}_{\mathrm{net}}(y,U(y,s))\,dy\,ds,
\end{equation*}
and
\begin{equation*}
\widehat{\Theta}_{U_0,N,\mathrm{net}}[U](x,t)
:=
\int_0^t\!\!\int_\Omega
\Phi_N(x,y,t-s)\,
\tilde{G}_{\mathrm{net}}\!\left(
y,\,
U(y,s)+\int_\Omega \Phi_N(y,z,s)U_0(z)\,dz
\right)\,dy\,ds.
\end{equation*}

\medskip

Using Lemma~\ref{lem:Theta-difference} and \eqref{eq:approx-NN-nonlinearity},
we obtain the following lemma (cf.\ \citep[Appendix~D]{furuya2024quantitative}).

\begin{lemma}
\label{lem:proof-estimate-1}
There exist $T_0\in(0,1)$, $\delta\in(0,1)$, and $M>0$ such that
\begin{enumerate}[\rm (i)]
\item For all
$U,V\in B_{L^\infty([0,T_0];L^\infty(\Omega)^2)}(M)$, 
\[
\|\widehat{\Theta}_{U_0}[U]-\widehat{\Theta}_{U_0}[V]\|_{L^\infty([0,T_0];L^\infty(\Omega)^2)}
\le
\delta\,
\|U-V\|_{L^\infty([0,T_0];L^\infty(\Omega)^2)}.
\]

\item
$\widehat{\Theta}_{U_0,N,\mathrm{net}}$ maps
$B_{L^\infty([0,T_0];L^\infty(\Omega)^2)}(M)$ into itself.

\item
For all
$U\in B_{L^\infty([0,T_0];L^\infty(\Omega)^2)}(M)$,
\[
\|\widehat{\Theta}_{U_0}[U]
-
\widehat{\Theta}_{U_0,N,\mathrm{net}}[U]\|_{L^\infty([0,T_0];L^\infty(\Omega)^2)}
\lesssim \epsilon.
\]
\end{enumerate}
\end{lemma}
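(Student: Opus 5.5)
The plan is to prove all three items from two ingredients: the uniform bounds \eqref{kk1}--\eqref{kk2} on $\tilde G$ over a compact set, and the $L^1$ bound on the Neumann heat kernel, $\sup_{x}\int_\Omega|\Phi_j(x,y,t)|\,dy\le e^{-\lambda_j t}\le 1$, which holds because $\Phi_j\ge0$ and $\int_\Omega\Phi_j(x,y,t)\,dy=e^{-\lambda_j t}$ under Neumann conditions. First fix $M>0$ and a compact set $\tilde\Omega\supset\bar\Omega\times[-M-C_1,M+C_1]^2$. Here $C_1$ bounds both $\int\Phi U_0$ and $\int\Phi_N U_0$ uniformly in $N$ and in $U_0\in\mathcal X$. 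For $\int\Phi U_0$ this comes from the maximum principle, since $\|\int\Phi U_0\|_\infty\le\max(M_1,M_2)$. For $\int\Phi_N U_0$ I would use \eqref{eq:rank-esti-22} on $t\ge t_*$ and, for small $t$, compare with the spectral projection bound. Alternatively, one can simply enlarge $\tilde\Omega$, using that the $L^\infty$ difference is controlled by \eqref{est-22}. On $\tilde\Omega$ let $B$ be the sup and Lipschitz constant of $\tilde G$. By \eqref{eq:approx-NN-nonlinearity}, $\tilde G_{\mathrm{net}}$ is bounded by $B+1$ there.

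\emph{Item (i).} Since $\tilde G$ is Lipschitz with constant $B$ on $\tilde\Omega$, for $U,V$ in the ball
\[
|\widehat\Theta_{U_0}[U]-\widehat\Theta_{U_0}[V]|(x,t)\le \int_0^t\!\!\int_\Omega|\Phi(x,y,t-s)|\,B\,|U-V|(y,s)\,dy\,ds\le BT_0\|U-V\|.
\]
Choosing $T_0<1/(2B)$ gives the contraction with $\delta=BT_0<1$.

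\emph{Item (ii).} Write $\Phi_N=\Phi-(\Phi-\Phi_N)$ and use \eqref{eq:rank-esti-11}:
\[
\|\widehat\Theta_{U_0,N,\mathrm{net}}[U]\|\le (B+1)\Big(\sup_x\|\Phi(x,\cdot,\cdot)\|_{L^1([0,T_0]\times\Omega)}+\epsilon\Big)\le (B+1)(T_0+1).
\]
Choosing $M\ge 2(B+1)$ yields invariance of the ball. One point needs care: $B$ depends on $M$ through $\tilde\Omega$. To avoid circularity, use the fact that $\tilde G$ is globally bounded by construction. Its truncation at $\tilde m_j$ from below, together with $u,v$ being restricted to bounded ranges, means $\tilde G$ can be taken bounded on all of $\R^{n+2}$ after a cutoff. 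This gives an $M$-independent bound $B_0$, and we set $M:=2(B_0+1)$, then $\tilde\Omega$, then $B$, then $T_0$.

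\emph{Item (iii).} Use the triangle inequality $\widehat\Theta_{U_0}-\widehat\Theta_{U_0,N,\mathrm{net}}=(\widehat\Theta_{U_0}-\widehat\Theta_{U_0,N})+(\widehat\Theta_{U_0,N}-\widehat\Theta_{U_0,N,\mathrm{net}})$. The first difference is $\lesssim\epsilon$ by Lemma~\ref{lem:Theta-difference}. The second is bounded by $\sup_x\|\Phi_N(x,\cdot,\cdot)\|_{L^1}\cdot\|\tilde G-\tilde G_{\mathrm{net}}\|_{L^\infty(\tilde\Omega)}\le(T_0+\epsilon)\epsilon$. This requires that the arguments $(y,U+\int\Phi_N U_0)$ lie in $\tilde\Omega$, which is where the uniform bound on $\int\Phi_N U_0$ is needed.

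The main obstacle is the uniform-in-$N$ $L^\infty$ bound on $\int_\Omega\Phi_N(\cdot,z,t)U_0(z)\,dz$ near $t=0$, since truncated kernels are not positive. I would handle it using $U_0\in W^{2,\alpha}$ with $\alpha>\max\{n,2\}$ together with the Neumann compatibility condition. Then $\int\Phi_N U_0=\sum_{m\le N}e^{-t(\cdot)}\langle U_0,\phi_m\rangle\phi_m$, and its difference from $\int\Phi U_0$ is bounded by the tail $\sum_{m>N}|\langle U_0,\phi_m\rangle|\|\phi_m\|_\infty$. Using $\langle U_0,\phi_m\rangle=\mu_m^{-1}\langle-\Delta U_0,\phi_m\rangle$, Weyl's law, and $\|\phi_m\|_\infty\lesssim\mu_m^{(n-1)/4}$, this tail is summable for $n\le3$. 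That gives an $N$-independent bound depending only on $\|U_0\|_{W^{2,\alpha}}$, which I would take as bounded on $\mathcal X$.
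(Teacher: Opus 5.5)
Your item (iii) follows the paper's own route: split through $\widehat{\Theta}_{U_0,N}$, then apply Lemma~\ref{lem:Theta-difference} and \eqref{eq:approx-NN-nonlinearity}. Item (i) is the standard Lipschitz estimate. The gap is in (ii).

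Your bound $(B+1)(T_0+\epsilon)\le(B+1)(T_0+1)$ cannot be made small by shrinking $T_0$, because the $\epsilon$ from \eqref{eq:rank-esti-11} does not improve with $T_0$. That is what forces $M\ge 2(B(M)+1)$. You then break this circularity by asserting that $\tilde G$ is globally bounded, and it is not. $\tilde G$ is truncated only from below, so, e.g., $\tilde g_2\ge(\inf\rho_2)\,\tilde m_2^{-s}\,u^r\to\infty$ as $u\to\infty$ by Assumption~\ref{ass:coefficients}(b),(c),(g). In the classical case $\tilde g_1$ grows like $u^p$ with $p>1$, so $M\ge 2(B(M)+1)$ has no solution in general. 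The remark that $u,v$ stay in bounded ranges is exactly what (ii) is meant to establish, so it cannot be used to prove it.

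The repair is to reverse the order of choices, as in Appendix~\ref{app:local-proof}, where the box is fixed first and $T_0$ is then determined by $C_0+e^{BT_0}-1<C_1$. Fix $M$ (say $M=1$), then $\tilde\Omega$ and $B=B(M)$, and only then shrink $T_0$. This needs a bound that is small in $T_0$ and uniform in $N$ and $\epsilon$, and \eqref{eq:est-rank-1-no-delta} supplies it. Indeed $\|\Phi_N(x,\cdot,t)\|_{L^1(\Omega)^{2\times 2}}\lesssim 1+t^{-n/4}$ for every $N$, hence $\sup_x\int_0^{T_0}\|\Phi_N(x,\cdot,t)\|_{L^1(\Omega)^{2\times2}}\,dt\le C'T_0^{1-n/4}$ since $n\le 3$. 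Choosing $T_0$ with $(B+1)C'T_0^{1-n/4}\le M$ and $BT_0<1$ gives (i) and (ii) with $T_0,\delta,M$ independent of $\epsilon$ and $N$.

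You are right that an $N$-uniform $L^\infty$ bound on $\int_\Omega\Phi_N(\cdot,z,t)U_0(z)\,dz$ near $t=0$ is needed to keep the arguments of $\tilde G$ and $\tilde G_{\mathrm{net}}$ in $\tilde\Omega$. The paper uses this silently already in \eqref{est-L}. \eqref{est-22} cannot supply it, since $e^{-ctN^{2/n}}t^{-n/4}$ blows up as $t\to0$, so ``enlarging $\tilde\Omega$'' does not help.

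However, your tail argument does not close for $n=3$. Set $c_m:=\langle\Delta U_0,\phi_m\rangle$; in general you only know $(c_m)\in\ell^2$. Your terms are then bounded by $\mu_m^{(n-5)/4}|c_m|$. A sum $\sum_m a_m|c_m|$ is finite for all $(c_m)\in\ell^2$ only if $(a_m)\in\ell^2$. Here $\sum_m\mu_m^{(n-5)/2}\asymp\sum_m m^{(n-5)/n}$, which is finite for $n\le2$ but equals $\sum_m m^{-2/3}=\infty$ for $n=3$.

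Replace the individual sup-norm bound by the averaged one, $\sum_{\mu_m\le\Lambda}|\phi_m(x)|^2\lesssim\Lambda^{n/2}$, which follows from \eqref{eq:diag-heat}. It gives $\sum_{m>N}\mu_m^{-2}|\phi_m(x)|^2\lesssim\mu_{N+1}^{n/2-2}$ for $n\le3$. By Cauchy--Schwarz this yields a tail bound $\lesssim\|\Delta U_0\|_{L^2}$. Equivalently, use elliptic regularity for the Neumann Laplacian and $H^2(\Omega)\hookrightarrow L^\infty(\Omega)$.

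Even then the constant depends on $\|U_0\|_{W^{2,\alpha}}$, which is not bounded on $\mathcal X$ as defined. So $M$ and $T_0$ would depend on $U_0$, and the uniformity in Theorem~\ref{thm:main} is lost unless you add such a bound, together with \eqref{comp}, to $\mathcal X$.

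A cleaner way around both difficulties is to make your ``cutoff'' explicit. Clamp $(u,v)$ to $[\tilde m_1,\tilde M_1]\times[\tilde m_2,\tilde M_2]$ inside both $\tilde G$ and $\tilde G_{\mathrm{net}}$. The clamp $w\mapsto a+\mathrm{ReLU}(w-a)-\mathrm{ReLU}(w-b)$ is an exact ReLU layer, so the complexity bounds are unchanged. By Proposition~\ref{prop:global-sol} the true solution never leaves this box, so $\Gamma^+$ is unaffected. Both maps then become globally bounded and Lipschitz, and $\|\tilde G-\tilde G_{\mathrm{net}}\|_{L^\infty}\le\epsilon$ holds for all arguments. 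Your choice $M=2(B_0+1)$ becomes legitimate, and no bound on $\int_\Omega\Phi_NU_0$ is needed at all. You must, however, state that the lemma is then proved for this modified $\tilde G$.
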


\medskip\medskip\medskip

\noindent{\bf Step 3.}
We define the neural operator
\[
\Gamma : \mathcal{X}(C_0) \to L^{\infty}([0,T_0]; L^{\infty}(\Omega)^{2})
\]
by
\begin{equation}\label{def:Gamma}
\Gamma(U_0)(x,t)
:=
\Theta_{U_0,N,\mathrm{net}}^{[k+1]}[U^{(0)}](x,t).
\end{equation}

\begin{lemma}
\label{lem:step-3}
Let $k=\lceil \log(\epsilon^{-1})\rceil\in\mathbb{N}$. Then, for any $U_0\in\mathcal{X}$,
\[
\|\Gamma^{+}(U_0)-\Gamma(U_0)\|_{L^{\infty}_{t^\beta}([0,T_0];L^{\infty}(\Omega)^2)}
\lesssim \epsilon.
\]
\end{lemma}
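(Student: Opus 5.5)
We split the error with the triangle inequality through the exact Picard iterate $U^{(k+1)}=\Theta_{U_0}^{[k+1]}[U^{(0)}]$:
\[
\|\Gamma^+(U_0)-\Gamma(U_0)\|_{L^\infty_{t^\beta}}
\le \|U-U^{(k+1)}\|_{L^\infty_{t^\beta}}+\|\Theta_{U_0}^{[k+1]}[U^{(0)}]-\Theta_{U_0,N,\mathrm{net}}^{[k+1]}[U^{(0)}]\|_{L^\infty_{t^\beta}} .
\]
Since $T_0<1$ and $\beta>0$, we have $t^\beta\le 1$, so the first term is bounded by the unweighted $L^\infty([0,T_0];L^\infty)$ norm. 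Proposition~\ref{prop:local-sol} then gives the bound $C/(k+1)!$. With $k=\lceil\log\epsilon^{-1}\rceil$ we have $(k+1)!\ge e^{k}\ge\epsilon^{-1}$ (using $k!\ge (k/e)^k$ for $k$ large and $e^k\ge\epsilon^{-1}$), so this term is $\lesssim\epsilon$.

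For the second term we separate the free evolution from the nonlinear part. The plan is to write $\Theta_{U_0}[U]=S U_0+\widehat\Theta_{U_0}[U-SU_0]$, where $SU_0:=\int_\Omega\Phi(\cdot,y,\cdot)U_0(y)\,dy$, and analogously $\Theta_{U_0,N,\mathrm{net}}[U]=S_NU_0+\widehat\Theta_{U_0,N,\mathrm{net}}[U-S_NU_0]$. Setting $W^{(j)}:=U^{(j)}-SU_0$ and $\tilde W^{(j)}:=\tilde U^{(j)}-S_NU_0$, the iterations become $W^{(j)}=\widehat\Theta_{U_0}[W^{(j-1)}]$ and $\tilde W^{(j)}=\widehat\Theta_{U_0,N,\mathrm{net}}[\tilde W^{(j-1)}]$. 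We then run the standard perturbed-contraction argument in the ball of radius $M$ from Lemma~\ref{lem:proof-estimate-1}:
\[
\|W^{(j)}-\tilde W^{(j)}\|\le \|\widehat\Theta_{U_0}[W^{(j-1)}]-\widehat\Theta_{U_0}[\tilde W^{(j-1)}]\|+\|\widehat\Theta_{U_0}[\tilde W^{(j-1)}]-\widehat\Theta_{U_0,N,\mathrm{net}}[\tilde W^{(j-1)}]\|\le \delta\|W^{(j-1)}-\tilde W^{(j-1)}\|+C\epsilon.
\]
Here part (ii) of the lemma keeps $\tilde W^{(j)}$ in the ball, and part (i), applied to the exact iterates (which stay bounded by the estimates in Appendix~\ref{app:local-proof}, enlarging $M$ if needed), controls the first term. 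Part (iii) controls the second. Since $W^{(0)}-\tilde W^{(0)}=(S_N-S)U_0$, iterating gives
\[
\|W^{(k+1)}-\tilde W^{(k+1)}\|\le \delta^{k+1}\|(S-S_N)U_0\|+\frac{C\epsilon}{1-\delta},
\]
uniformly in $k$, in the unweighted norm. This is where we would choose $U^{(0)}$, for instance $U^{(0)}=0$; then $W^{(0)}=-SU_0$ and $\tilde W^{(0)}=-S_NU_0$, and only their difference matters.

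It remains to add back the free parts: $U^{(k+1)}-\tilde U^{(k+1)}=(W^{(k+1)}-\tilde W^{(k+1)})+(S-S_N)U_0$. The nonlinear differences are bounded in $L^\infty_t$, so after multiplying by $t^\beta\le1$ they are $\lesssim\epsilon$. The free term $(S-S_N)U_0$ is the one that genuinely needs the weight, and also enters through $\delta^{k+1}\|(S-S_N)U_0\|$. For it we use \eqref{eq:rank-esti-22}:
\[
t^\beta|(S-S_N)U_0(x,t)|\le \|U_0\|_{L^\infty}\,t^\beta\|\Phi(x,\cdot,t)-\Phi_N(x,\cdot,t)\|_{L^1}\lesssim M_1\epsilon+M_2\epsilon.
\]

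The main obstacle is the unweighted appearance of $(S-S_N)U_0$, both in the initial mismatch $W^{(0)}-\tilde W^{(0)}$ and implicitly inside the arguments of $\tilde G$. Its sup over $t$ is not small near $t=0$, where the truncated kernel cannot resolve $U_0$. We would handle this in two ways. First, the mismatch inside $\widehat\Theta$ is already absorbed into Lemma~\ref{lem:Theta-difference} and hence into Lemma~\ref{lem:proof-estimate-1}(iii). Second, for the initial term we note that $\|(S-S_N)U_0\|_{L^\infty}\lesssim 1$ uniformly (both $S$ and $S_N$ are bounded on $L^\infty$ for the smooth $U_0\in W^{2,\alpha}$; alternatively, start both iterations from $W^{(0)}=\tilde W^{(0)}=0$ by a suitable choice of $U^{(0)}$). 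The factor $\delta^{k+1}$ with $k\sim\log\epsilon^{-1}$ then makes this contribution $\lesssim\epsilon^{c}$, and choosing $k=\lceil C'\log\epsilon^{-1}\rceil$ with a constant depending on $\delta$ makes it $\lesssim\epsilon$.
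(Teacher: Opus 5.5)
Your overall route matches the paper's. You use the triangle inequality through the exact Picard iterate, and $t^\beta\le 1$ with Proposition~\ref{prop:local-sol} for the first term. You split $\Theta=S+\widehat\Theta$ and run a perturbed-contraction recursion driven by Lemma~\ref{lem:proof-estimate-1}(i)--(iii). You use \eqref{eq:rank-esti-22} for the free part $(S-S_N)U_0$, the only term that needs the weight.

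\textbf{The gap: the step-$0$ mismatch.} The one place you go beyond the paper is the mismatch $W^{(0)}-\tilde W^{(0)}=(S_N-S)U_0$, and that step fails. The paper never meets this term, because its identities write both iterates as $\widehat\Theta$-iterations from one common starting function. Your claim $\sup_t\|(S-S_N)(t)U_0\|_{L^\infty}\lesssim 1$ is not available, for three reasons.
\begin{enumerate}
\item At $t=0$, $S_N(0)=P_N$ is a spectral projection. These are not uniformly bounded on $L^\infty$, since the Lebesgue constants grow with $N$.
\item The only kernel bound at hand, \eqref{eq:est-rank-1-no-delta}, blows up like $t^{-n/4}$.
\item Smoothness of an individual $U_0\in W^{2,\alpha}$ would at best give a $U_0$-dependent constant, via an argument you do not supply. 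The theorem needs uniformity over $\mathcal X$, which bounds $U_0$ only in $L^\infty$.
\end{enumerate}
For the same reason, $\tilde W^{(0)}=U^{(0)}-S_NU_0$ need not lie in $B(M)$. So neither (i) nor (iii) of Lemma~\ref{lem:proof-estimate-1} can be invoked at the first step, and this matters because $\tilde G$ is only locally Lipschitz. Finally, replacing $k$ by $\lceil C'\log\epsilon^{-1}\rceil$ changes the statement you were asked to prove.

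\textbf{The fix: start the comparison at $j=1$.} With a common $U^{(0)}$ you have directly
\[
W^{(1)}-\tilde W^{(1)}=\int_0^t\!\!\int_\Omega(\Phi-\Phi_N)(x,y,t-s)\,\tilde G(y,U^{(0)}(y,s))\,dy\,ds+\int_0^t\!\!\int_\Omega\Phi_N(x,y,t-s)\,(\tilde G-\tilde G_{\mathrm{net}})(y,U^{(0)}(y,s))\,dy\,ds .
\]
This is $\lesssim\epsilon$ in the unweighted norm by \eqref{eq:rank-esti-11} and \eqref{eq:approx-NN-nonlinearity}. Here $(S-S_N)U_0$ never enters $\tilde G$. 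It is your intermediate term $\widehat\Theta_{U_0}[\tilde W^{(0)}]$ that artificially inserts it there and then measures it in the sup norm. Both $W^{(1)}$ and $\tilde W^{(1)}$ are bounded, so they lie in $B(M)$. From $j=1$ on, your recursion $e_j\le\delta e_{j-1}+C\epsilon$ gives $e_{k+1}\lesssim\epsilon/(1-\delta)$ with no $\delta^{k+1}e_0$ term, so $k=\lceil\log\epsilon^{-1}\rceil$ works as stated.

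Your ``alternative'' can also be made rigorous, but only by decoupling the starting points. Start the exact Picard iteration from $SU_0$, which Proposition~\ref{prop:local-sol} permits, and the network from $S_NU_0$, which its input layer produces; then $W^{(0)}=\tilde W^{(0)}=0$. A single common $U^{(0)}$ cannot make both vanish.
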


\begin{proof}
By the triangle inequality,
\begin{align*}
&
\|\Gamma^{+}(U_0)-\Gamma(U_0)\|_{L^{\infty}_{t^\beta}([0,T_0];L^{\infty}(\Omega)^2)}
\\
&\le
\|\Gamma^{+}(U_0)-\Theta_{U_0}^{[k+1]}[U^{(0)}]\|_{L^{\infty}_{t^\beta}([0,T_0];L^{\infty}(\Omega)^2)}
+
\|\Theta_{U_0}^{[k+1]}[U^{(0)}]-\Gamma(U_0)\|_{L^{\infty}_{t^\beta}([0,T_0];L^{\infty}(\Omega)^2)}.
\end{align*}

For the first term, by Proposition~\ref{prop:local-sol} and
$\Gamma^{+}(U_0)=U$, we have
\begin{align}
\label{eq:est-net-N-1}
&
\|\Gamma^{+}(U_0)-\Theta_{U_0}^{[k+1]}[U^{(0)}]\|_{L^{\infty}_{t^\beta}([0,T_0];L^{\infty}(\Omega)^2)}
\le 
\|U-U^{(k)}\|_{L^{\infty}([0,T_0];L^{\infty}(\Omega)^2)}
\lesssim \frac{1}{k!}
\lesssim \epsilon,
\end{align}
for $k=\lceil \log(\epsilon^{-1})\rceil$.

For the second term, using Lemma~\ref{lem:proof-estimate-1}(i) and the identities
\[
\Theta_{U_0}^{[k+1]}[U^{(0)}]
=
\widehat{\Theta}_{U_0}^{[k]}[U^{(0)}]
+
\int_{\Omega}\Phi(x,y,t)U_0(y)\,dy,
\]
\[
\Theta_{U_0,N,\mathrm{net}}^{[k+1]}[U^{(0)}]
=
\widehat{\Theta}_{U_0,N,\mathrm{net}}^{[k]}[U^{(0)}]
+
\int_{\Omega}\Phi_N(x,y,t)U_0(y)\,dy,
\]
we obtain
\begin{align*}
&
\|\Theta_{U_0}^{[k+1]}[U^{(0)}]-\Gamma(U_0)\|_{L^{\infty}_{t^\beta}([0,T_0];L^{\infty}(\Omega)^2)}
\\
&\lesssim
\sum_{j=1}^{k}
\delta^{k-j}
\Bigl\|
\widehat{\Theta}_{U_0}\bigl[u_{j-1,N}\bigr]
-
\widehat{\Theta}_{U_0,N,\mathrm{net}}\bigl[u_{j-1,N}\bigr]
\Bigr\|_{L^{\infty}([0,T_0];L^{\infty}(\Omega)^2)}
+
\sup_{t\in[0,T_0]}\sup_{x\in \Omega}
t^{\beta}
\|\Phi(x,\cdot,t)-\Phi_N(x,\cdot,t)\|_{L^{1}(\Omega)^2},
\end{align*}
where
\[
u_{j,N}:=\widehat{\Theta}_{U_0,N,\mathrm{net}}^{[j]}[U^{(0)}].
\]

By Lemma~\ref{lem:proof-estimate-1}(ii), we have
$u_{j-1,N}\in B_{L^{\infty}([0,T_0];L^{\infty}(\Omega)^2)}(M)$.
Hence, Lemma~\ref{lem:proof-estimate-1}(iii) yields
\[
\|\widehat{\Theta}_{U_0}[u_{j-1,N}]
-
\widehat{\Theta}_{U_0,N,\mathrm{net}}[u_{j-1,N}]\|_{L^{\infty}([0,T_0];L^{\infty}(\Omega)^2)}
\lesssim \epsilon.
\]
Therefore, using \eqref{eq:step-1}
\begin{align}
\label{eq:est-net-N-2}
\|\Theta_{U_0}^{[k+1]}[U^{(0)}]-\Gamma(U_0)\|_{L^{\infty}_{t^\beta}([0,T_0];L^{\infty}(\Omega)^2)}
&\lesssim
\sum_{j=1}^{k}\delta^{k-j}\epsilon+\epsilon
\lesssim \epsilon.
\end{align}

Combining \eqref{eq:est-net-N-1} and \eqref{eq:est-net-N-2}, we conclude that
\[
\|\Gamma^{+}(U_0)-\Gamma(U_0)\|_{L^{\infty}_{t^\beta}([0,T_0];L^{\infty}(\Omega)^2)}
\lesssim \epsilon.
\]
This completes the proof.
\end{proof}

\medskip\medskip\medskip

\noindent{\bf Final step.}
Finally, it remains to represent the approximate operator $\Gamma$
defined in \eqref{def:Gamma} as a neural operator in the sense of
Definition~\ref{def:neural-operator}, and to derive quantitative
estimates of its complexity.

\begin{lemma}
\label{lem:representation-NO}
Let $\Gamma$ be the map defined by \eqref{def:Gamma}. Then
\[
\Gamma \in \mathcal{NO}^{L,H,N}_{\mathrm{ReLU}},
\]
where $L=L(\Gamma)$, $H=H(\Gamma)$, and $N=N(\Gamma)$ satisfy
\[
L(\Gamma)\le C(\log(\epsilon^{-1}))^{2},\qquad
H(\Gamma)\le C\,\epsilon^{-(n+2)}(\log(\epsilon^{-1}))^{2},\qquad
N(\Gamma)\le C\,\epsilon^{-\frac{2n}{4\beta -n}}.
\]
\end{lemma}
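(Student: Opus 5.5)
We show that each Picard step $\Theta_{U_0,N,\mathrm{net}}$ can be written as a short block of layers from Definition~\ref{def:neural-operator}, and then stack $k+1$ such blocks. Here $k=\lceil\log(\epsilon^{-1})\rceil$ as in Lemma~\ref{lem:step-3}.

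\textbf{Input layer.} The free term $\int_\Omega\Phi_N(x,y,t)U_0(y)\,dy$ equals exactly $(K^{(0)}U_0)(x,t)$ with $C^{(0)}$ the identity. This holds because $\Phi_{N,j}$ is the truncated sum $\sum_{m\le N}e^{-t(d_j\mu_m+\lambda_j)}\phi_m(x)\phi_m(y)$, which matches the diagonal matrix $C_m(t)$.

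\textbf{Iteration block.} At every layer we carry two groups of channels. One group stores this free term $F(x,t)$; the other stores the current iterate $U^{(j)}(x,t)$. We also need the spatial variable $x$ as an input to $\tilde G_{\mathrm{net}}$, whereas Definition~\ref{def:neural-operator} only provides the constant bias $b^{(\ell)}$.

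\begin{itemize}
\item[\emph{Nonlinearity.}] To realize $U^{(j)}\mapsto \tilde G_{\mathrm{net}}(y,U^{(j)})$, we use the local part $W^{(\ell)}\cdot+b^{(\ell)}$ followed by ReLU, layer by layer along the ReLU network of Step~2. This costs depth $\lesssim\log\epsilon^{-1}$ and $\lesssim\epsilon^{-(n+2)}\log\epsilon^{-1}$ neurons by \eqref{eq:quanti-esti-nonlinearity}. We set $K^{(\ell)}=0$ in these layers ($C^{(\ell,1)}=0$). The stored channels $F$ and $U^{(j)}$ are passed through ReLU via $\sigma(z)-\sigma(-z)=z$, which doubles their widths at constant cost.
\item[\emph{Green step.}] At the last layer of the block we use $K^{(\ell)}$ with $C^{(\ell,2)}$ selecting the two channels of $\tilde G_{\mathrm{net}}$ and $C^{(\ell,1)}$ identity. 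This produces $\int_0^t\int_\Omega\Phi_N(x,y,t-s)\tilde G_{\mathrm{net}}(y,U^{(j)})\,dy\,ds$. We then add $F$ through $W^{(\ell)}$, giving $U^{(j+1)}=\Theta_{U_0,N,\mathrm{net}}[U^{(j)}]$.
\end{itemize}

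The initial iterate $U^{(0)}$ we take constant (e.g. $0$), so it is realized by biases. The final block's output layer has no activation, which matches Definition~\ref{def:neural-operator}.

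\textbf{Counting.} Composing $k+1$ blocks gives $L\lesssim k\log\epsilon^{-1}\lesssim(\log\epsilon^{-1})^2$. Summing widths over all layers gives $H\lesssim k\,\epsilon^{-(n+2)}\log\epsilon^{-1}$, up to constant-width pass-through overhead; this is $\lesssim\epsilon^{-(n+2)}(\log\epsilon^{-1})^2$. The spectral rank $N$ is the same in every $K^{(\ell)}$ and is the one from Lemma~\ref{lem:rank-est}, hence $N\lesssim\epsilon^{-2n/(4\beta-n)}$.

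\textbf{Main obstacle.} The main obstacle is feeding the spatial coordinate $y$ into $\tilde G_{\mathrm{net}}$. The plan is to generate it from $K^{(0)}$ as well. Since $\phi_1$ is constant and $\mu_1=0$, we could also produce $x$-dependent channels. However, $x$ itself is not a finite combination of the $\phi_m$, so this does not directly give the coordinate.

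I would first try to sidestep this. One option is to absorb the $x$-dependence of $\tilde G$ (through $\rho_j$ and $\sigma_j$) by approximating $\tilde G(\cdot,U)$ with a network acting only on $U$ plus a term expanded in the first $N$ eigenfunctions. That term is injected via $K^{(0)}$-type channels, for instance by applying $K^{(0)}$ to a constant and using $\sigma_j$'s projection. Alternatively, one can note that the definition allows treating the coordinate as an extra input channel, as in the cited strategy of \cite{furuya2024quantitative}, Appendix~D. In either route the extra error is $\lesssim\epsilon$ and the complexity bounds are unchanged.
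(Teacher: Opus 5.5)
Your construction is the paper's. It also stores the free term and the current iterate in separate channels, realizes $\tilde G_{\mathrm{net}}$ and the identity pass-throughs by ReLU layers, does the Green step with a $K^{(\ell)}$ acting on the $\tilde G_{\mathrm{net}}$ channels and adds the free term through $W^{(\ell)}$, and counts $k\sim\log\epsilon^{-1}$ blocks to get the same bounds on $L$, $H$ and $N$. Storing only the free term $F$, as you do, is actually cleaner than the paper. There the carried channel $\tilde U^{(0)}=\Theta_{U_0,N,\mathrm{net}}[U^{(0)}]$ already contains a nonlinear term, so its recursion is not literally the Picard iteration.

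On the spatial coordinate, the paper takes your second route. It uses the $x$-dependent bias $b^{(0)}(x,t)=(0,0,x)^\top$ at the input layer and carries $x$ along by the identity; in effect it tacitly adds a positional input to Definition~\ref{def:neural-operator}.

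You should drop your first route: your claim that it costs only $\lesssim\epsilon$ is false. Apart from the fact that $\rho_j(x,u,v)$ is not additively separable in $x$, the literal class cannot produce any $x$-dependent, $U_0$-independent channel. For constant $U_0\in\mathcal{X}$ one has $\int_\Omega U_0\phi_m\,dy=0$ for $m\ge 2$. Likewise, $K^{(\ell)}$ applied to an $x$-independent field keeps only the $\phi_1$-mode. By induction, every network with constant biases therefore maps constant initial data to an $x$-independent output.

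For such data, $\Gamma^+(U_0)$ is $x$-dependent as soon as, e.g., $\sigma_1$ is nonconstant and $\rho_1$ does not depend on $x$. The error would then be bounded below independently of $L$, $H$, $N$. So the coordinate has to be supplied as an extra input, exactly as in the paper.
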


\begin{proof}
Recall that
\begin{align*}
\Theta_{U_0,N,\mathrm{net}}[U](x,t)
&=
\int_{\Omega}\Phi_N(x,y,t)U_0(y)\,dy
+
\int_{0}^{t}\!\!\int_{\Omega}
\Phi_N(x,y,t-s)\tilde{G}_{\mathrm{net}}(y,U(y,s))\,dy\,ds.
\end{align*}
We set
\[
\tilde{U}^{(0)}(x,t):=
\int_{\Omega}\Phi_N(x,y,t)U_0(y)\,dy + \int_{0}^{t}\!\!\int_{\Omega}
\Phi_N(x,y,t-s)\tilde{G}_{\mathrm{net}}(y,U^{(0)}(y,s))\,dy\,ds.
\]
Then, by definition,
\[
\Gamma(U_0)(x,t)
=
\Theta_{U_0,N,\mathrm{net}}^{[k+1]}[U^{(0)}](x,t)
=
\tilde{U}^{(k+1)}(x,t),
\]
where $\tilde{U}^{(j)}$ is recursively defined by
\begin{align*}
\tilde{U}^{(j+1)}(x,t)
:=
\tilde{U}^{(0)}(x,t)
+
\int_{0}^{t}\!\!\int_{\Omega}
\Phi_N(x,y,t-s)\tilde{G}_{\mathrm{net}}(y,\tilde{U}^{(j)}(y,s))\,dy\,ds,
\qquad j=0,\dots,k.
\end{align*}

We define a linear operator
\[
K^{(0)}:L^\infty(\Omega)^2
\to
L^\infty([0,T_0];L^\infty(\Omega)^{n+4})
\]
by
\[
(K^{(0)}U_0)(x,t)
:=
\begin{pmatrix}
\tilde{U}^{(0)}(x,t)\\
\tilde{U}^{(0)}(x,t)\\
0
\end{pmatrix},
\]
and set
\[
b^{(0)}(x,t)
:=
\begin{pmatrix}
0\\
0\\
x
\end{pmatrix}
\in\mathbb{R}^{n+4}.
\]
Then
\[
(K^{(0)}U_0)(x,t)+b^{(0)}(x,t)
=
\begin{pmatrix}
\tilde{U}^{(0)}(x,t)\\
\tilde{U}^{(0)}(x,t)\\
x
\end{pmatrix},
\]
which corresponds to the input layer.

Next, define
\[
W:=
\begin{pmatrix}
0 & I_2 & 0\\
0 & I_2 & 0\\
0 & 0   & I_n
\end{pmatrix}
\in\mathbb{R}^{(n+4)\times(n+4)},
\]
and a linear operator $K$ by
\[
(KU)(x,t)
:=
\begin{pmatrix}
\displaystyle\int_{0}^{t}\!\!\int_{\Omega}
\Phi_N(x,y,t-s)U_1(y,s)\,dy\,ds\\
0\\
0
\end{pmatrix},
\qquad
U=(U_1,U_2,U_3).
\]

We also define a ReLU neural network
$\tilde{G}'_{\mathrm{net}}:\mathbb{R}^{n+4}\to\mathbb{R}^{n+4}$ by
\[
\tilde{G}'_{\mathrm{net}}(U)
=
\begin{pmatrix}
\tilde{G}_{\mathrm{net}}(U_3,U_1)\\
U_2\\
U_3
\end{pmatrix},
\qquad
U=(U_1,U_2,U_3)\in\mathbb{R}^2\times\mathbb{R}^2\times\mathbb{R}^n,
\]
where we used that ReLU networks can represent the identity map.

A direct computation shows that
\begin{align*}
\bigl[(W+K)\circ\tilde{G}'_{\mathrm{net}}\bigr]
\begin{pmatrix}
\tilde{U}^{(j)}\\
\tilde{U}^{(0)}\\
\mathrm{id}_x
\end{pmatrix}
(x,t)
=
\begin{pmatrix}
\tilde{U}^{(j+1)}(x,t)\\
\tilde{U}^{(0)}(x,t)\\
x
\end{pmatrix},
\qquad j=0,\dots,k-1,
\end{align*}
which corresponds to one hidden layer.

Finally, letting
\[
W':=(I_2,0,0)\in\mathbb{R}^{2\times(n+4)},
\]
we obtain the representation
\[
\Gamma(U_0)
=
W'
\circ
\Bigl[(W+K)\circ\tilde{G}'_{\mathrm{net}}\Bigr]^{[k]}
\circ
(K^{(0)}+b^{(0)})(U_0).
\]

Therefore, $\Gamma$ is a neural operator of the form
$\mathcal{NO}^{L,H,N}_{\mathrm{ReLU}}$.
Using \eqref{eq:quanti-esti-nonlinearity} and $k\sim\log(\epsilon^{-1})$,
we obtain
\[
L(\Gamma)\lesssim k\,L(\tilde{G}_{\mathrm{net}})
\lesssim (\log(\epsilon^{-1}))^{2},
\qquad
H(\Gamma)\lesssim k\,H(\tilde{G}_{\mathrm{net}})
\lesssim \epsilon^{-(n+2)}(\log(\epsilon^{-1}))^{2}.
\]
This completes the proof of Theorem~\ref{thm:main}.
\end{proof}


\section{Proof of Lemma~\ref{lem:rank-est}}
\label{app:rank-est}

Let $\{(\mu_m,\phi_m)\}_{m\ge1}$ be the Neumann Laplacian eigenpairs on $\Omega$:
\[
-\Delta \phi_m = \mu_m \phi_m,\qquad \partial_\nu \phi_m|_{\partial\Omega}=0,\qquad
\{\phi_m\}_{m\ge1}\ \text{orthonormal in }L^2(\Omega).
\]
Then, the Green function $\Phi_j(x,y,t)$   admits the spectral representation
\[
\Phi_j(x,y,t)=\sum_{m\ge1} e^{-t(d_j\mu_m+\lambda_j)}\phi_m(x)\phi_m(y),
\qquad t>0,
\]
with convergence in $L^2(\Omega\times\Omega)$ for each fixed $t>0$.

Let $P_N$ be the orthogonal projection onto $\mathrm{span}\{\phi_m\}_{m\le N}$ in $L^2(\Omega)$.
Define
\[
\Phi_{N,j}(x,y,t):=\sum_{m\le N} e^{-t(d_j\mu_m+\lambda_j)}\phi_m(x)\phi_m(y),
\]
\[
K_{N,j}(x,y,t):=\Phi_j(x,y,t)-\Phi_{N,j}(x,y,t)
=\sum_{m>N} e^{-t(d_j\mu_m+\lambda_j)}\phi_m(x)\phi_m(y).
\]

\subsection*{Step 1: an $L^1_y$-bound for $K_{N,j}(x,\cdot,t)$}

Since $\Omega$ is bounded, $\|f\|_{L^1(\Omega)}\le |\Omega|^{1/2}\|f\|_{L^2(\Omega)}$.
Hence it suffices to estimate $\|K_{N,j}(x,\cdot,t)\|_{L^2_y}$.

By orthonormality of $\{\phi_m\}$ in the $y$-variable,
\begin{align*}
\|K_{N,j}(x,\cdot,t)\|_{L^2_y}^2
&=
\sum_{m>N} e^{-2t(d_j\mu_m+\lambda_j)}|\phi_m(x)|^2.
\end{align*}
Using $\mu_m\ge \mu_{N+1}$ for $m>N$, we have
\[
e^{-2td_j\mu_m}\le e^{-t d_j\mu_{N+1}}\,e^{-t d_j\mu_m},
\]
and thus
\begin{align*}
\|K_{N,j}(x,\cdot,t)\|_{L^2_y}^2
&\le
e^{-t d_j\mu_{N+1}}
\sum_{m>N} e^{-t(d_j\mu_m+2\lambda_j)}|\phi_m(x)|^2 \\
&\le
e^{-t d_j\mu_{N+1}}
\sum_{m\ge1} e^{-t(d_j\mu_m+\lambda_j)}|\phi_m(x)|^2
=
e^{-t d_j\mu_{N+1}}\Phi_j(x,x,t).
\end{align*}
Therefore,
\begin{equation}
\label{eq:L2-bound-K}
\|K_{N,j}(x,\cdot,t)\|_{L^2_y}
\le
e^{-(t/2)d_j\mu_{N+1}}\,\Phi_j(x,x,t)^{1/2}.
\end{equation}
Then there exists $C>0$ such that for all $t\in(0,T_0]$ (where $T_0 \in (0,1)$)
\begin{equation}
\label{eq:diag-heat}
\sup_{x\in\Omega}\Phi_j(x,x,t)\le C\,t^{-n/2}.
\end{equation}
Combining \eqref{eq:L2-bound-K}--\eqref{eq:diag-heat} with $\|\cdot\|_{L^1}\le |\Omega|^{1/2}\|\cdot\|_{L^2}$ yields
\begin{equation}
\label{eq:est-rank-1-no-delta}
\|K_{N,j}(x,\cdot,t)\|_{L^1_y}
\lesssim
e^{-(t/2)d_j\mu_{N+1}}\,t^{-n/4}
\lesssim
e^{-c t d_j N^{2/n}}\,t^{-n/4},
\qquad t\in(0,T_0],\ x\in\Omega,
\end{equation}
where we used Weyl's law $\mu_{N+1}\simeq N^{2/n}$. Consequently, \eqref{eq:est-rank-1-no-delta} implies that for any $f\in L^\infty(\Omega)$
\begin{equation}\label{est-22}
\sup_{x\in\Omega}\left|\int_\Omega K_{N,j}(x,y,t)f(y)dy\right|\le\|f\|_{L^\infty}\sup_{x\in\Omega}\|K_{N,j}(x,\cdot,t)\|_{L^1_y}\lesssim
e^{-c t d_j N^{2/n}}\,t^{-n/4}\,\|f\|_{L^\infty},
\quad t\in(0,T_0].
\end{equation}

\subsection*{Step 2: proof of \eqref{eq:rank-esti-11}}

Integrating \eqref{eq:est-rank-1-no-delta} over $t\in(0,T_0]$ gives
\[
\int_0^{T_0}\|K_{N,j}(x,\cdot,t)\|_{L^1_y}\,dt
\lesssim
\int_0^{T_0} e^{-c t d_j N^{2/n}}\, t^{-n/4}\,dt
\lesssim N^{\frac{n-4}{2n}} \leq N^{\frac{n-4\beta}{2n}}.
\]

\subsection*{Step 3: proof of \eqref{eq:rank-esti-22}}

Multiplying \eqref{eq:est-rank-1-no-delta} by $t^\beta$ yields
\[
t^\beta\|K_{N,j}(x,\cdot,t)\|_{L^1_y}
\lesssim
e^{-c t d_j N^{2/n}}\, t^{\beta-n/4}.
\]
Since $\beta-n/4>0$, taking $\sup_{t>0}$ and setting $u=c t d_j N^{2/n}$,
\begin{align*}
\sup_{t>0} e^{-c t d_j N^{2/n}}\, t^{\beta-n/4}
&=
(c d_j N^{2/n})^{-(\beta-n/4)}\sup_{u>0} e^{-u}u^{\beta-n/4}
\lesssim
N^{\frac{n-4\beta}{2n}}.
\end{align*}
Hence
\[
\sup_{t\in(0,T_0]} t^\beta\|K_{N,j}(x,\cdot,t)\|_{L^1_y}
\lesssim
N^{\frac{n-4\beta}{2n}},
\]
\qed

\newpage

\section{Additional Experiments}
\label{app:add-ex}

\begin{figure*}[h]
  \centering
  \begin{subfigure}[t]{0.49\textwidth}
    \centering
    \includegraphics[width=\linewidth]{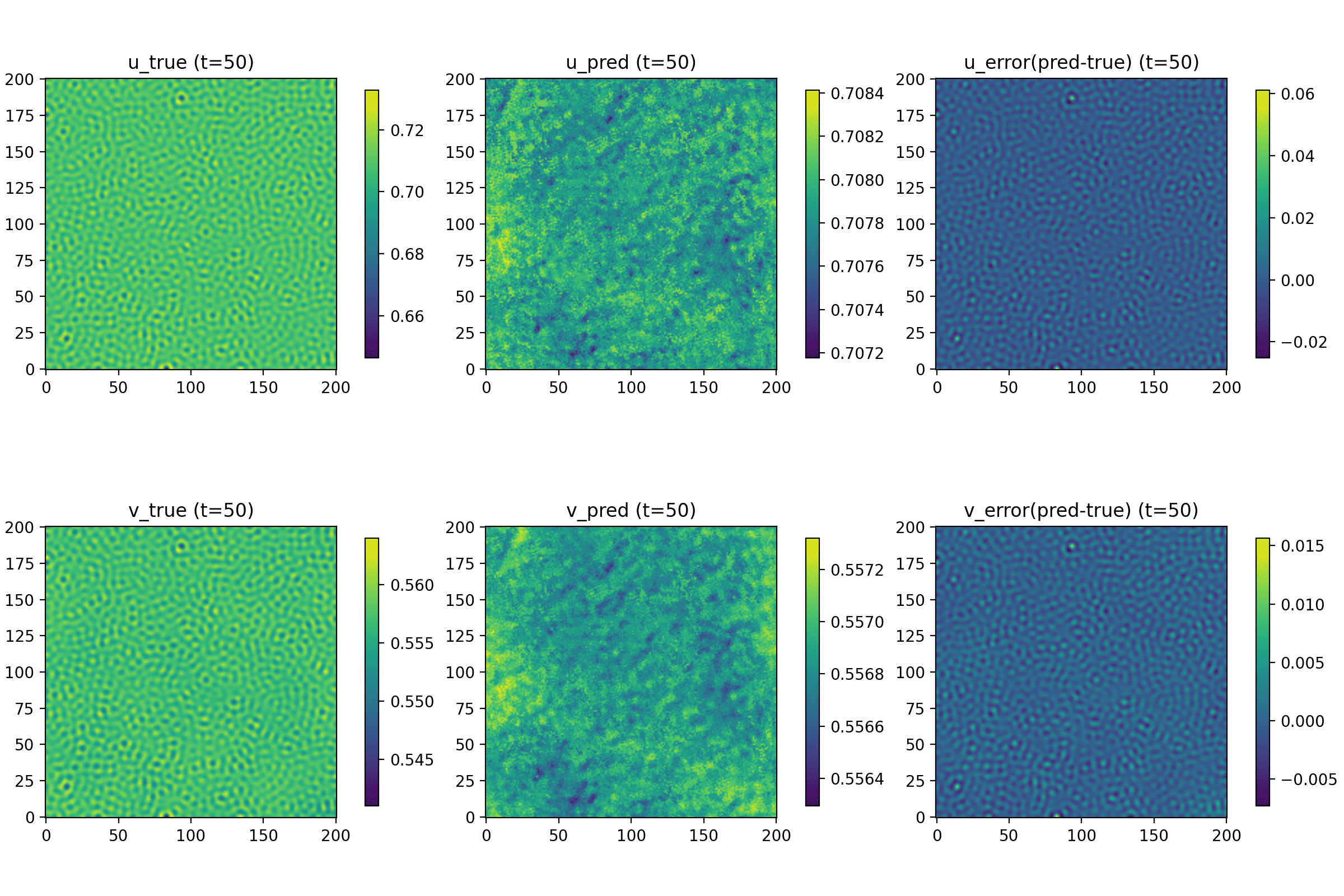}
    \caption{FNO, $t=50$.}
  \end{subfigure}\hfill
  \begin{subfigure}[t]{0.49\textwidth}
    \centering
    \includegraphics[width=\linewidth]{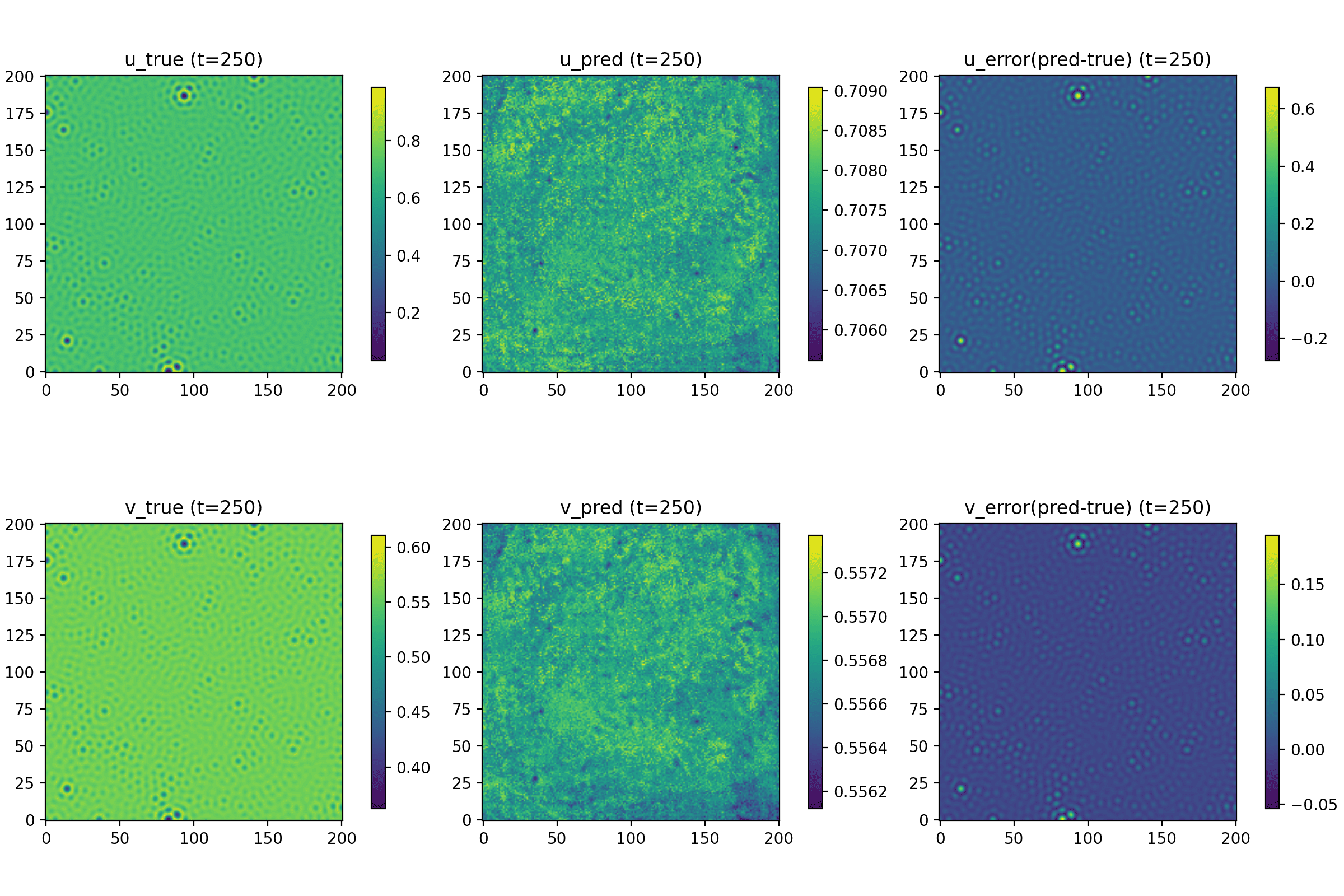}
    \caption{FNO, $t=250$.}
  \end{subfigure}

  \vspace{0.25em}

  \begin{subfigure}[t]{0.49\textwidth}
    \centering
    \includegraphics[width=\linewidth]{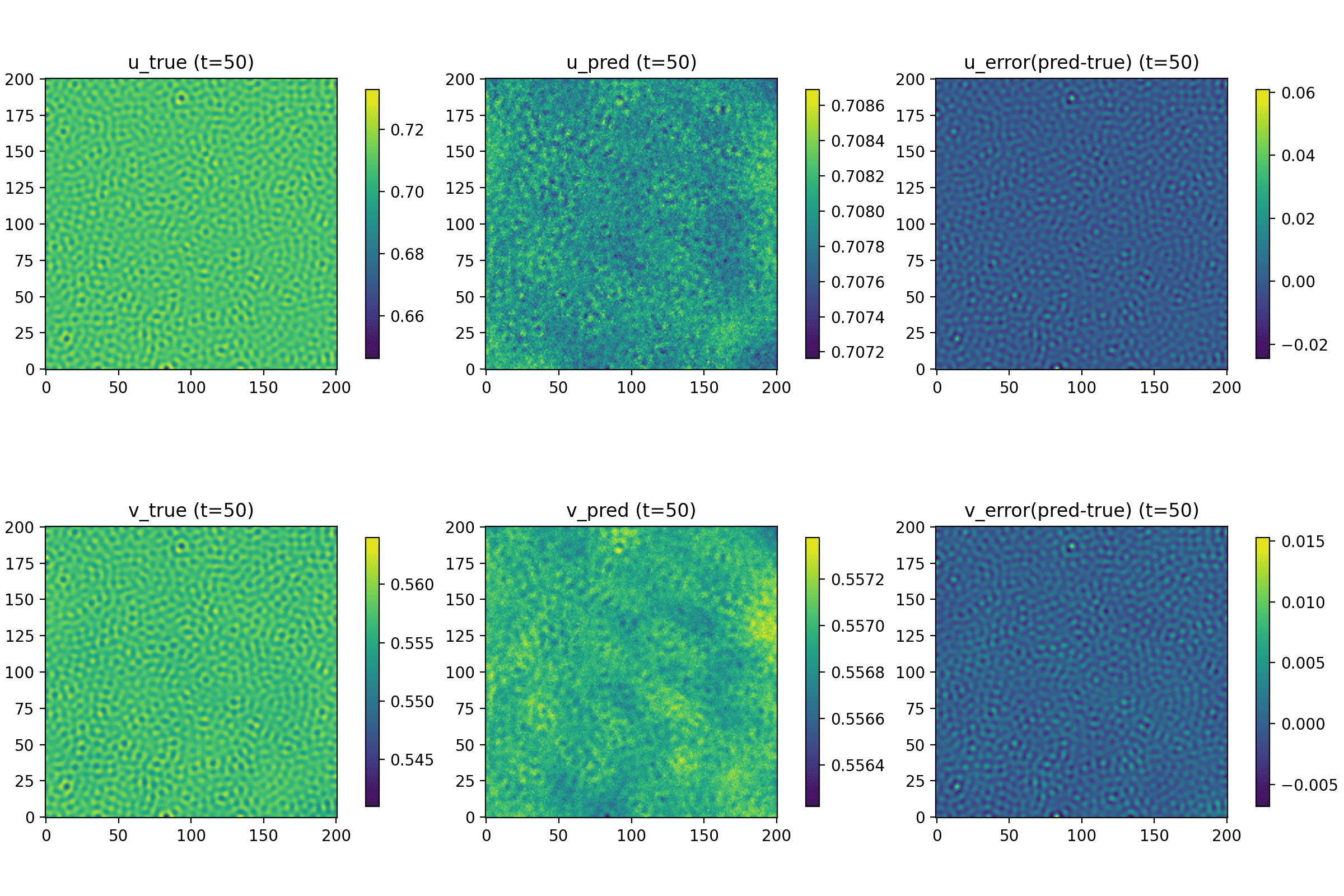}
    \caption{LENO, $t=50$.}
  \end{subfigure}\hfill
  \begin{subfigure}[t]{0.49\textwidth}
    \centering
    \includegraphics[width=\linewidth]{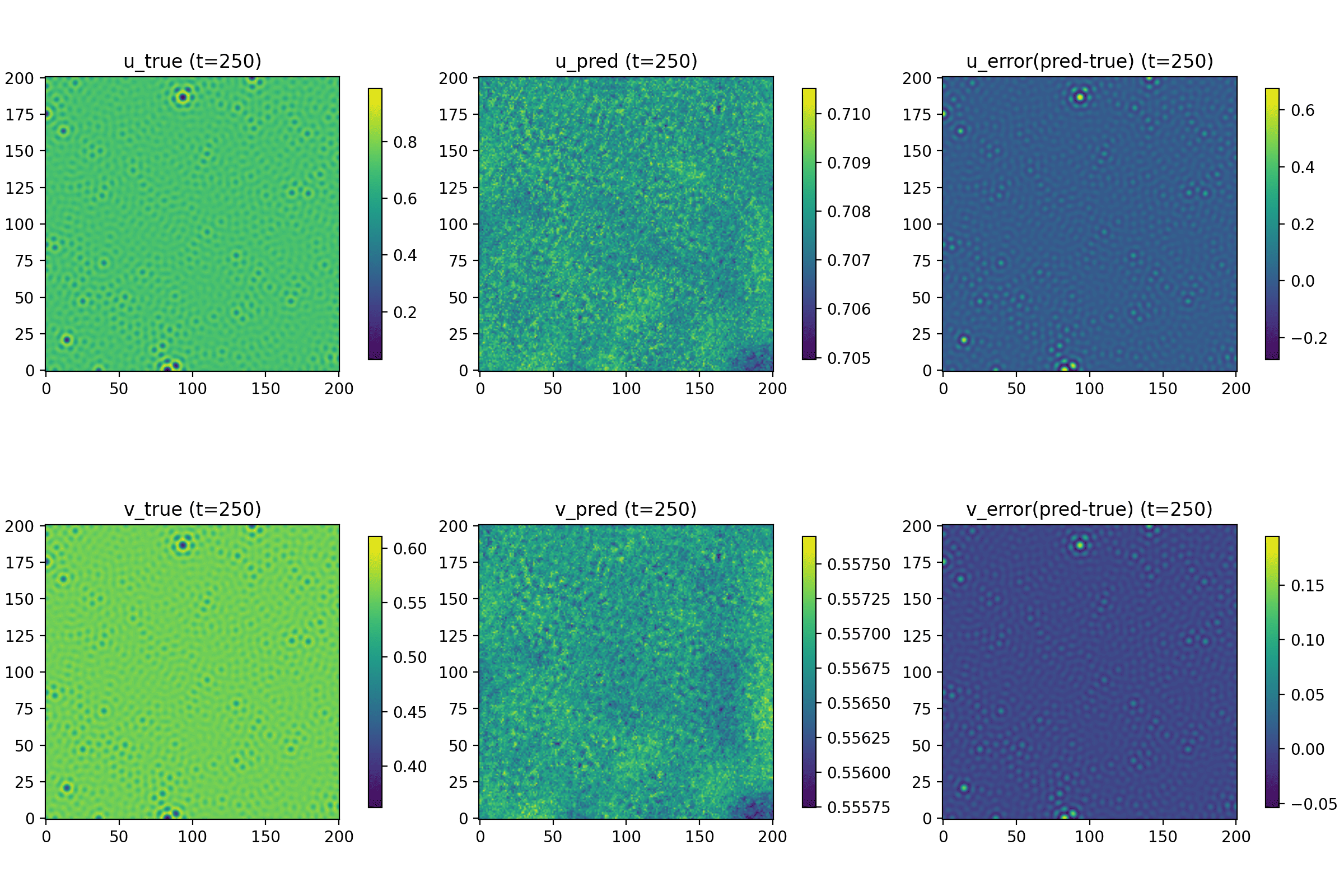}
    \caption{LENO, $t=250$.}
  \end{subfigure}

  \caption{Qualitative comparison of FNO and LENO for $s=0.708$.
  Each panel shows the ground truth, prediction, and pointwise error for $u$ (top) and $v$ (bottom).}
  \label{fig:qual-s0708}
\end{figure*}

\begin{figure*}[t]
  \centering
  \begin{subfigure}[t]{0.49\textwidth}
    \centering
    \includegraphics[width=\linewidth]{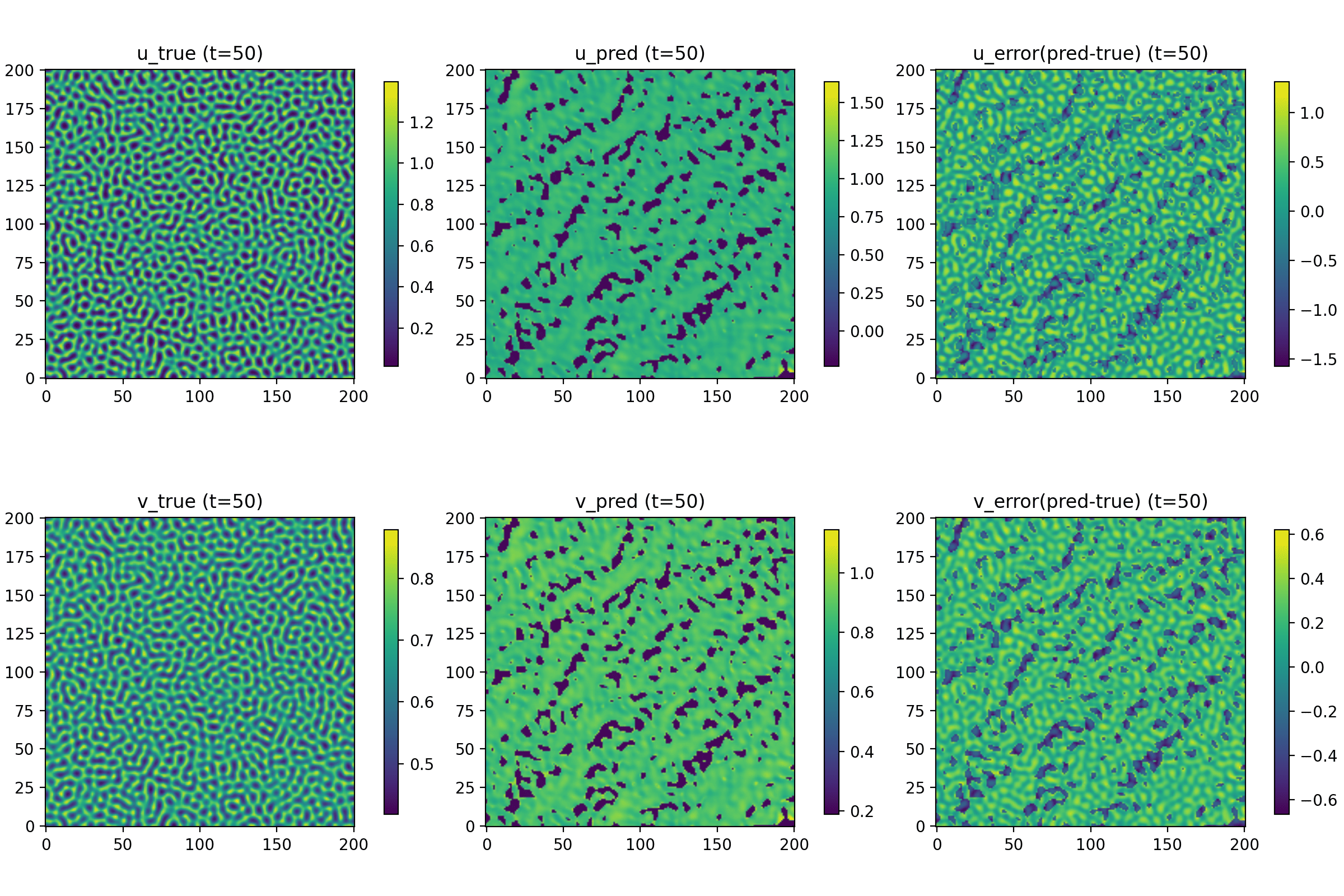}
    \caption{FNO, $t=50$.}
  \end{subfigure}\hfill
  \begin{subfigure}[t]{0.49\textwidth}
    \centering
    \includegraphics[width=\linewidth]{figs/FNO-s=0.785-T=250.png}
    \caption{FNO, $t=250$.}
  \end{subfigure}

  \vspace{0.25em}

  \begin{subfigure}[t]{0.49\textwidth}
    \centering
    \includegraphics[width=\linewidth]{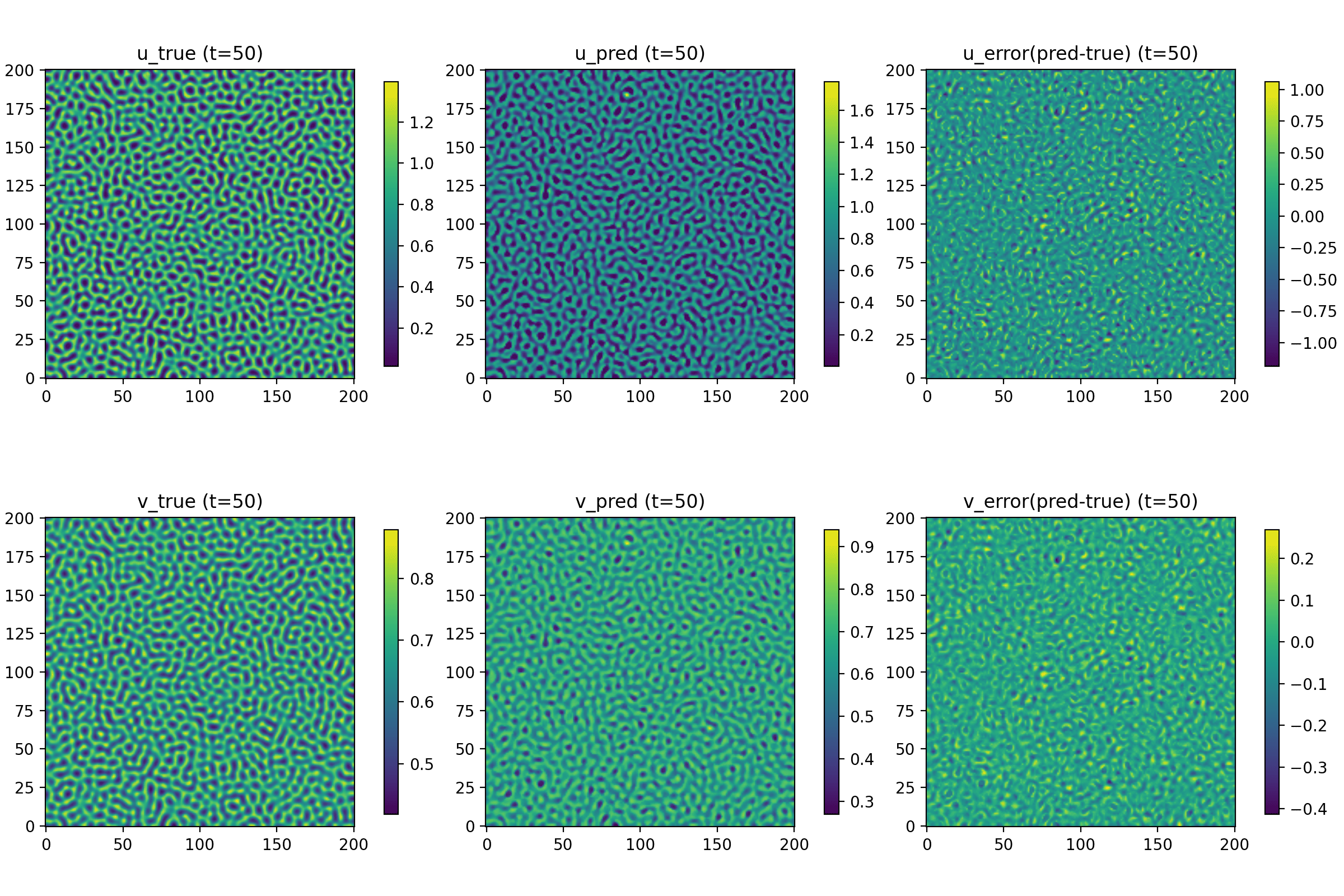}
    \caption{LENO, $t=50$.}
  \end{subfigure}\hfill
  \begin{subfigure}[t]{0.49\textwidth}
    \centering
    \includegraphics[width=\linewidth]{figs/LENO-s=0.785-T=250.png}
    \caption{LENO, $t=250$.}
  \end{subfigure}

  \caption{Qualitative comparison of FNO and LENO for $s=0.785$.
  Each panel shows the ground truth, prediction, and pointwise error for $u$ (top) and $v$ (bottom).}
  \label{fig:qual-s0785}
\end{figure*}

\begin{figure*}[t]
  \centering
  \begin{subfigure}[t]{0.49\textwidth}
    \centering
    \includegraphics[width=\linewidth]{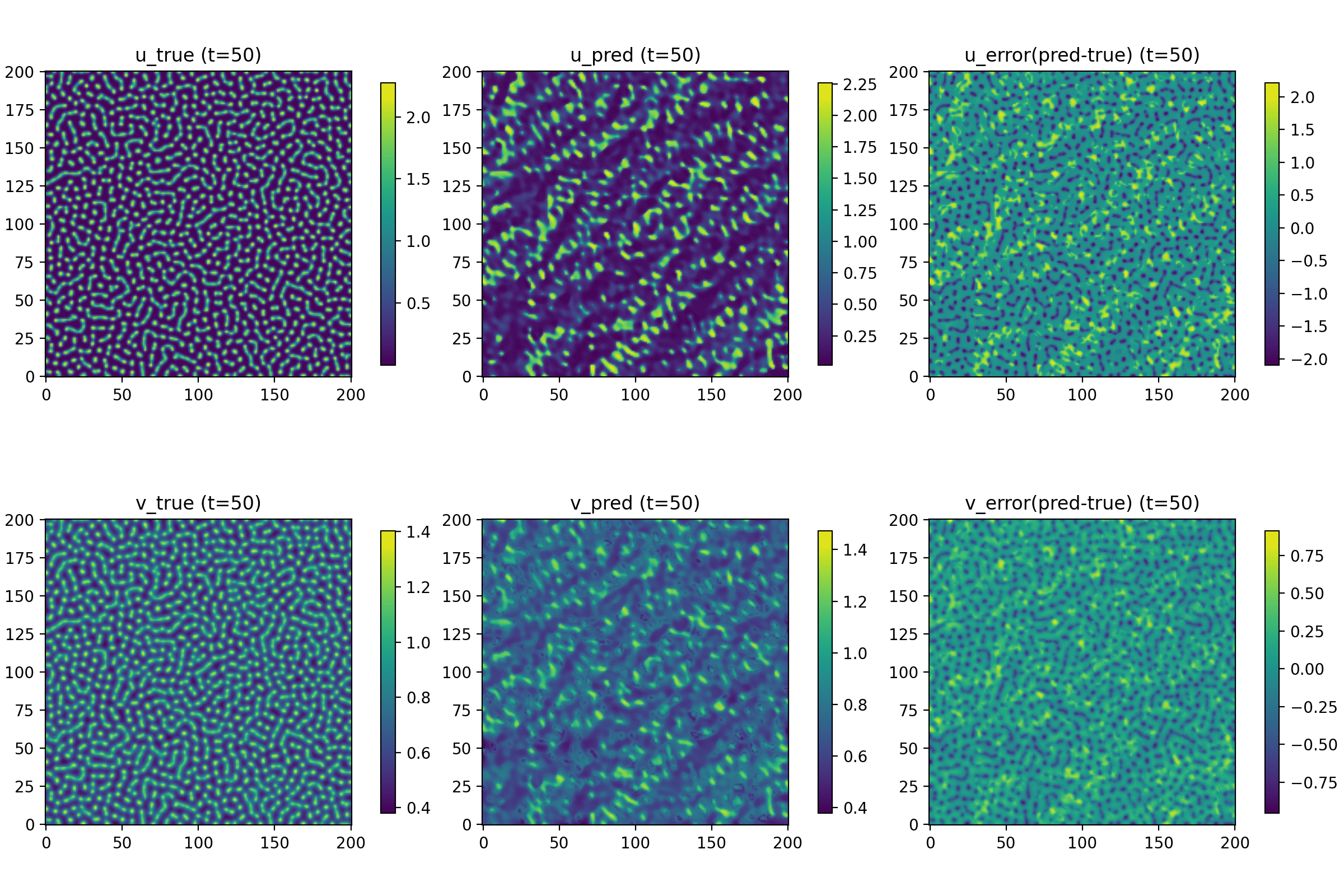}
    \caption{FNO, $t=50$.}
  \end{subfigure}\hfill
  \begin{subfigure}[t]{0.49\textwidth}
    \centering
    \includegraphics[width=\linewidth]{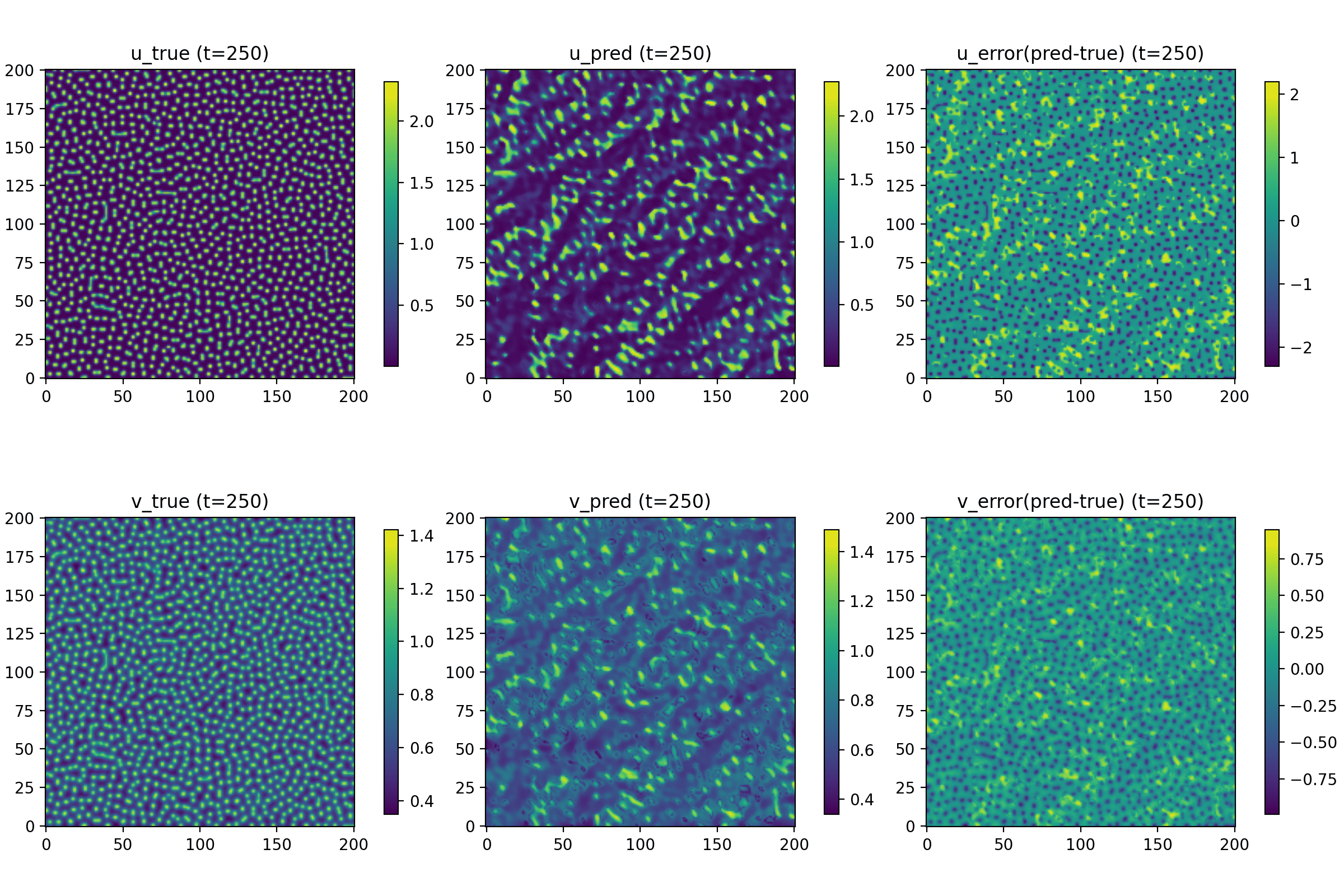}
    \caption{FNO, $t=250$.}
  \end{subfigure}

  \vspace{0.25em}

  \begin{subfigure}[t]{0.49\textwidth}
    \centering
    \includegraphics[width=\linewidth]{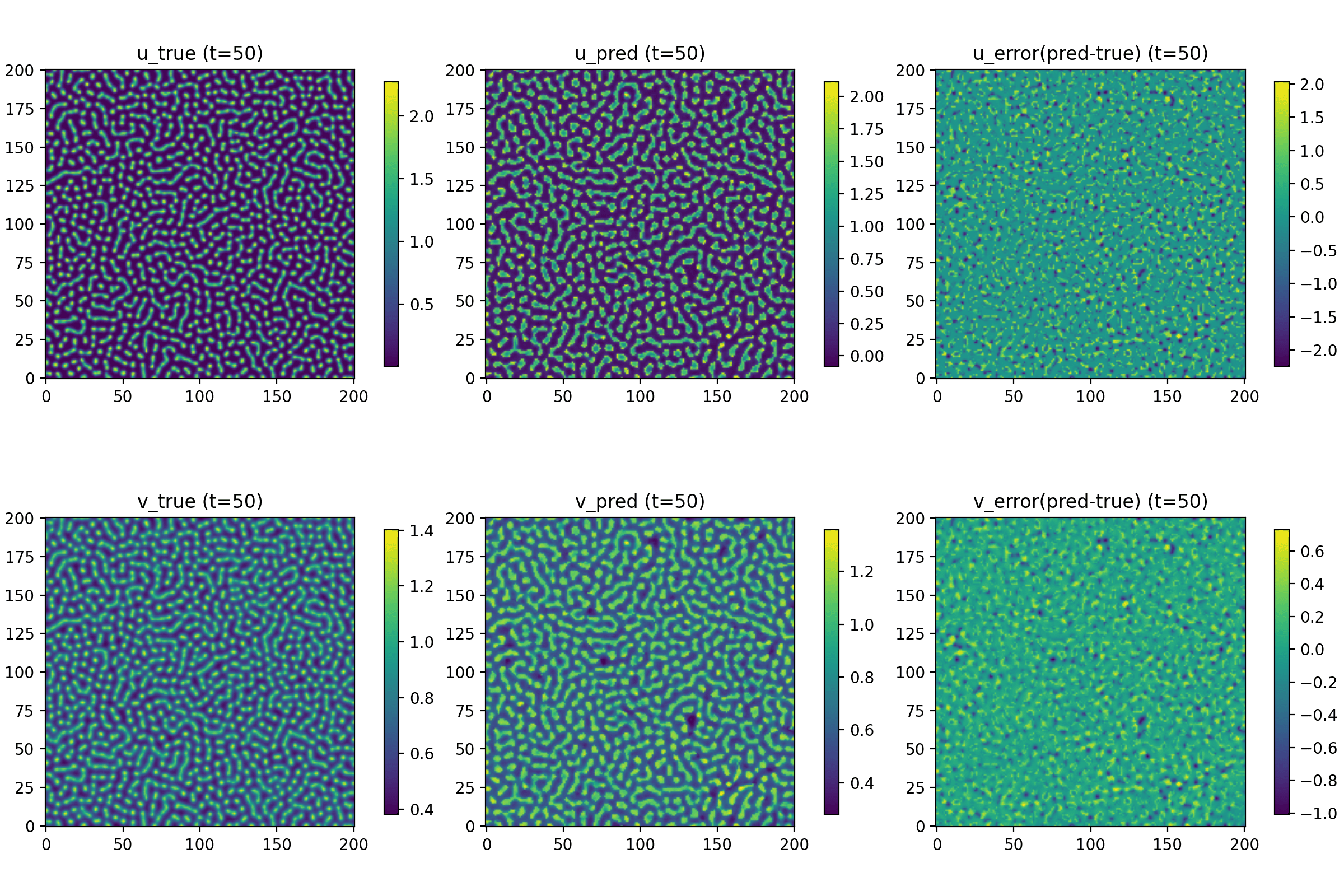}
    \caption{LENO, $t=50$.}
  \end{subfigure}\hfill
  \begin{subfigure}[t]{0.49\textwidth}
    \centering
    \includegraphics[width=\linewidth]{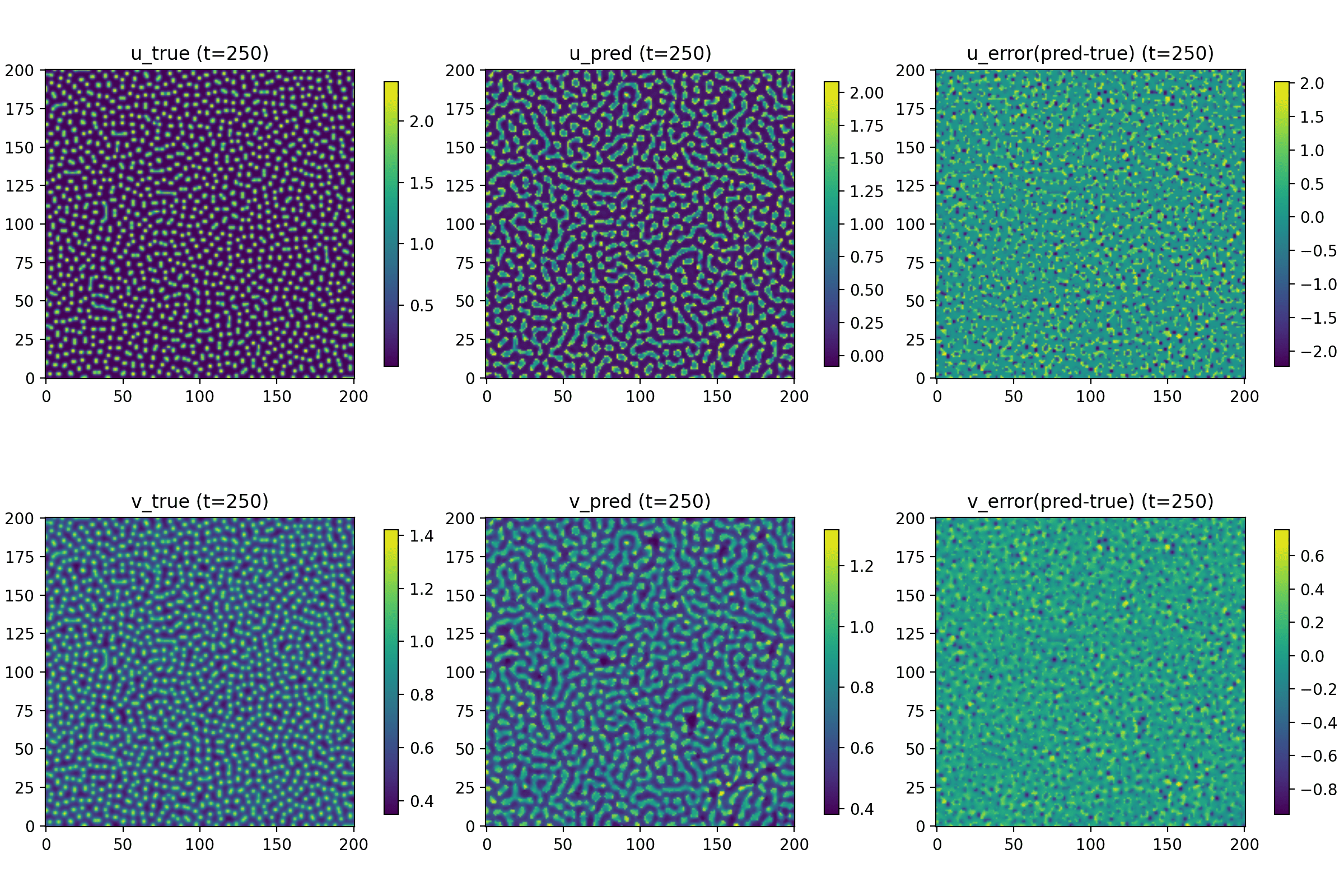}
    \caption{LENO, $t=250$.}
  \end{subfigure}

  \caption{Qualitative comparison of FNO and LENO for $s=0.85$.
  Each panel shows the ground truth, prediction, and pointwise error for $u$ (top) and $v$ (bottom).}
  \label{fig:qual-s085}
\end{figure*}

\end{document}